\def\eqref#1{equation~\ref{#1}}
\def\1{\bm{1}}
\DeclareMathAlphabet{\mathsfit}{\encodingdefault}{\sfdefault}{m}{sl}
\SetMathAlphabet{\mathsfit}{bold}{\encodingdefault}{\sfdefault}{bx}{n}
\newtheorem{theorem}{Theorem}
\newtheorem{lemma}{Lemma}
\newtheorem{definition}{Definition}
\title{Deeper Diffusion Models Amplify Bias}
\author{
  Shahin Hakemi\\
  The University of Western Australia\\
  \texttt{shahin.hakemi@research.uwa.edu.au} \\
  \And
  Naveed Akhtar \\
  The University of Melbourne \\
  \texttt{naveed.akhtar1@unimelb.edu.au} \\
  \AND
  Ghulam Mubashar Hassan \\
  The University of Western Australia\\
  \texttt{ghulam.hassan@uwa.edu.au} \\
  \And
  Ajmal Mian\\
  The University of Western Australia\\
  \texttt{ajmal.mian@uwa.edu.au} \\
}
\begin{document}

\thispagestyle{plain}
\pagestyle{plain}
\maketitle

\begin{abstract}
Despite the remarkable performance of generative Diffusion Models (DMs), their internal working is still not well understood, which is potentially problematic. This paper focuses on exploring the important notion of  \textit{bias-variance tradeoff} in diffusion models.
Providing a systematic foundation for this exploration, it establishes that at one extreme, the diffusion models may amplify the inherent bias in the training data, and on the other, they may  compromise the presumed privacy of the training samples. 
Our exploration aligns  with the \textit{memorization-generalization} understanding of the generative models, but it also expands further along this spectrum beyond ``generalization'', revealing the risk of \textit{bias amplification} in deeper models. 
Our claims are validated both theoretically and empirically.
\end{abstract}

\section{Introduction}
\label{sec:intro}
\vspace{-1mm}
Diffusion Models (DMs) show unprecedented capabilities in generating hyper-realistic and diverse synthetic images \citep{dhariwal2021diffusion,esser2021taming,podell2023sdxl}. Although they have seen rapid empirical progress and widespread adoption in various applications \citep{liu2023audioldm, huang2023make, li2022diffusion,wu2023ar, watson2023novo}, their underlying mechanisms and inference dynamics remain comparatively underexplored from a theoretical perspective. This growing gap between practical performance and foundational insight highlights the need for deeper analytical study, which could accelerate the costly training process by diminishing the need for trial and error, and possibly address bias and privacy concerns. 
\begin{figure}[h]
    \centering
    \includegraphics[width=.99\linewidth]{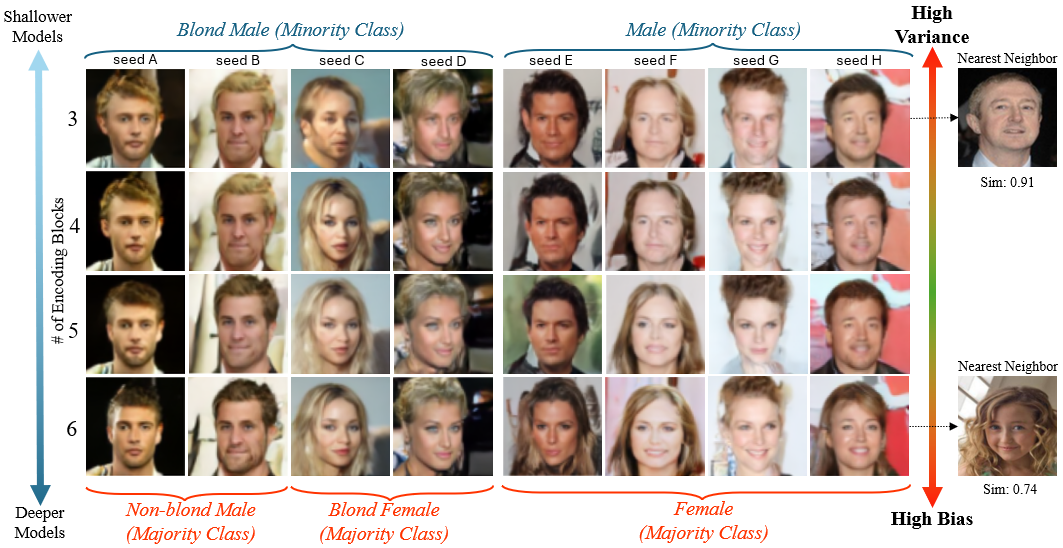}
    \caption{Deeper Diffusion Models (DMs) tend to generate samples of majority classes more compared to shallower models, while shallower models tend to give away information about their training data. Here, unconditional DMs with different depths are trained on CelebA \citep{liu2015faceattributes} that contains ``blond male'' group and ``male'' class as minorities (less than 1\% and 41.68\%, respectively). \textbf{High Bias:} Starting from the same seeds on which shallow models generate samples of the minority, deeper models generate samples of the majority. Instead of ``blond male'', deeper models generate ``non-blond male'' and ``blond female''. Also, deeper models generate samples of the majority class ``female'' instead of ``male'' . \textbf{High Variance:} Shallower models are prone to generate images similar to the training dataset samples. The generated image of the shallow model resembles its nearest neighbor much more compared to the image generated by the deep model. }
    \label{fig:zero}
\end{figure}

In this work, we explore the traditional concept of bias-variance tradeoff \citep{geman1992neural} in the scope of DMs. Intuitively, our definition of this tradeoff in DMs is roughly analogous to the bias-variance tradeoff in \textit{k}-Nearest Neighbors (\textit{k}-NNs). We conceptualize the training  of DMs as \textit{implicit} memorization of the training data, as opposed to \textit{explicit} storing of training samples in \textit{k}-NNs. We show that higher levels of abstraction in the encoding process of DMs are equivalent to larger \textit{k} values in \textit{k}-NN. So, extremely low abstraction levels lead to ``high variance'', while excessively high abstraction levels result in ``high bias'' (see \Cref{fig:zero}). We conceptualize the denoising process of DMs as \textit{memory retrieval} \citep{hoover2023memory}, which enables us to set a clear analogy between the way \textit{k}-NNs and DMs treat training data.

Relevant to our exploration, the concepts of \textit{memorization} and \textit{generalization} are also previously studied in the context of DMs \citep{yoon2023diffusion, li2023generalization, li2024understanding, vastola2025generalization}. Connecting our analysis to those studies, 
the established concept of ``memorization'' falls close to ``high variance'' as per our analysis, while ``generalization'' falls somewhere between the two extremes of ``high variance'' and ``high bias''. Consequently, our perspective  enables  more comprehensive insights, naturally revealing  that ``memorization'' and ``generalization'' are not the ultimate extremes, issuing the caution about going far in the direction of ``generalization'' that can lead to ``high bias'' in the models.

Our key contributions are summarized below: 
\setlist[itemize]{leftmargin=13pt}
\begin{itemize}[itemsep=-0.3mm,topsep=-0.3mm]
    \item We systematically explore the bias-variance tradeoff and its implications in the context of DMs. 
    \item We establish that the abstraction level of DM directly impacts bias-variance tradeoff, as extensively abstracted representations lead to exacerbation of bias, while conversely, low abstraction levels can undermine the presumed privacy of DMs.
\end{itemize}

\begin{figure}[t]
    \centering
    \includegraphics[width=\linewidth]{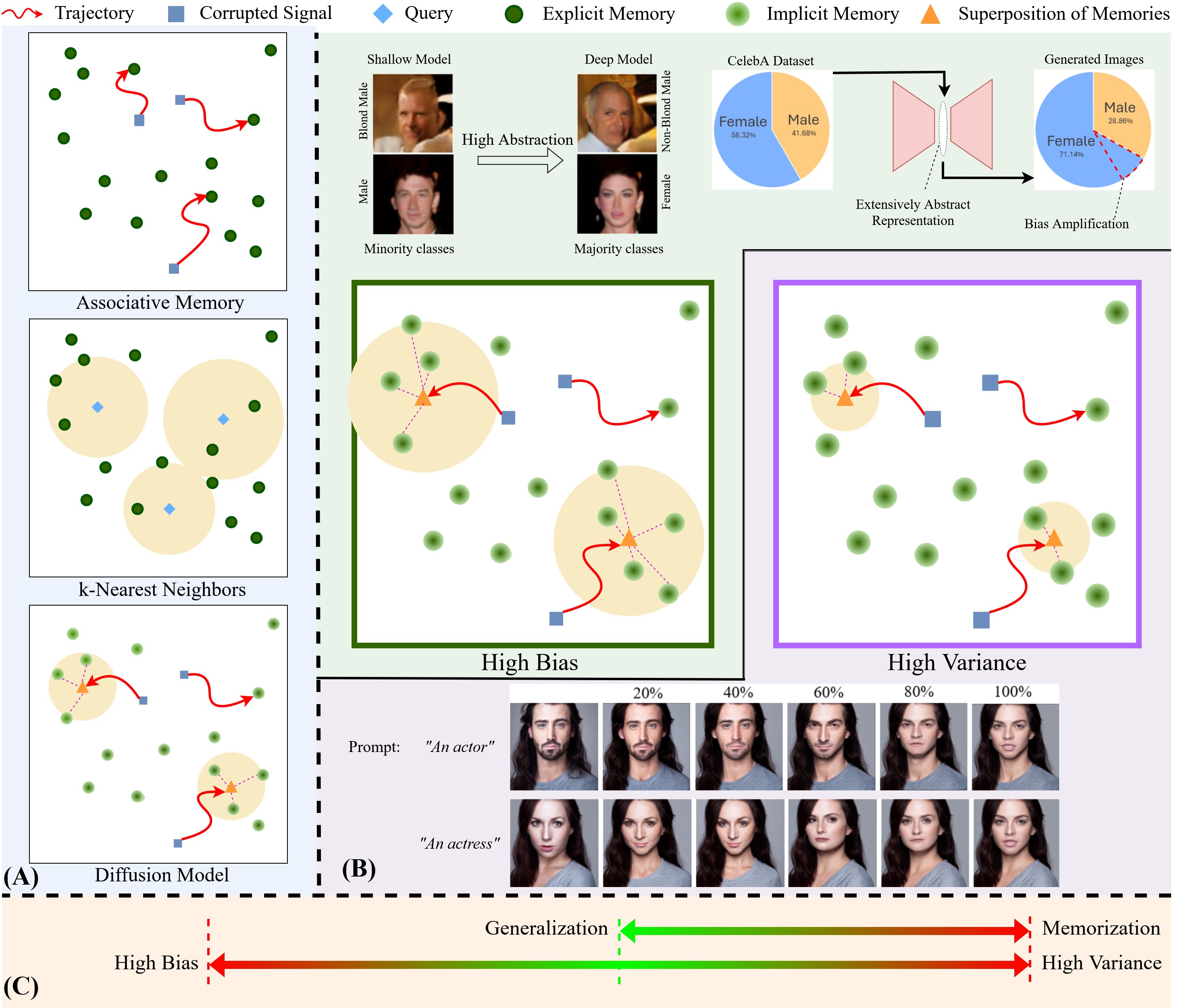}
    \caption{Illustration of the key concepts. \textbf{(A)} Comparison of how Associative Memory (AM), \textit{k}-NNs, and DMs store and attend to the training data. In AMs and \textit{k}-NNs, data is stored explicitly, as opposed to implicit storing in DMs. Corrupted signals (noise) in DMs converge to attractors like AMs. However, in DMs, there is no guarantee to converge to a memory, but the attractors are potentially \textit{superpositions of memories}. Like \textit{k}-NNs, DMs attend to a number of training samples during inference. \textbf{(B)} \textit{Top:} For smoother energy landscapes, initial corrupted signals tend to attend to a larger number of memories (High Bias), which leads to amplification of the inherent dataset bias. We show actual bias amplification of absolute 12.82\%  in CelebA~\citep{liu2015faceattributes} model. This leads to more frequent generation of the majority class. E.g.,  ``Non-Blond Male'' and ``Female'' are more frequently generated for CelebA by \textit{deeper} models, which naturally inherit higher bias as per our analysis.  \textit{Bottom:} For less smooth energy landscapes, initial corrupted signals are likely to attend to a smaller number of memories (High Variance). High variance restricts initial corrupted signals to converge to attractors
    corresponding to a small number of memories, leading to non-diverse generation, potentially revealing training data that is private. We show examples generated by the prompts ``An actor" and ``An actress". Starting from the same gaussian seed we gradually push SD V1.5 \citep{Rombach_2022_CVPR} to a higher variance state during the inference. Both cases eventually converge to the same/similar person. \textbf{(C)} Our explanation of ``bias-variance tradeoff'' for DMs covers the contrasting concepts of ``memorization'' and ``generalization''. However, it establishes a more complete perspective beyond ``generalization''. It captures the risk of falling into ``high bias'' state by getting too far from ``memorization''.}
    \label{fig:am_vs_dm}
    \vspace{-4mm}
\end{figure}
\vspace{-2mm}
\section{Diffusion Models as Implicit Memory Modules}
\vspace{-2mm}
To facilitate our ``bias-variance tradeoff'' explanation, 
first we need to conceptualize the generative process of DMs as a \textit{``memory recall''} task. As confirmed in \citep{carlini2023extracting}, DMs can memorize specific training samples and output them during generation. They implicitly store data in latent representations, enabling lossy recall via iterative refinement. This perspective emphasizes the role of the model’s learned score function as implicitly encoding a memory retrieval mechanism.   

To further explain the memory retrieval behavior, we employ the analogy between energy-based Associative Memories (AMs) \citep{krotov2021large} and DMs, inspired by \citep{hoover2023memory}. Under this perspective, we view the latent space of DMs as an energy landscape, in which high energy corresponds to noisy signals and low energy represents images resembling the training data distribution. The denoising process starts with a high energy corrupted signal, and by following the direction of negative energy gradients, it eventually ends up in an \textit{attractor}, i.e., a local minimum. In DMs, an attractor can be a \textit{``memory''} or a \textit{``superposition of memories''} (see \Cref{fig:am_vs_dm} (A)). 
The learning phase of DMs can be viewed analogously to traditional memory-based algorithms, such as \textit{k}-NN, in the sense that they both  store training data representations. However,  it differs to \textit{k}-NN in explicit-implicit dichotomy. In our interpretation of the training process of DMs, training data is \textit{implicitly} memorized by DMs, as opposed to the \textit{explicit} memorization in \textit{k}-NN. 
Hence, we treat DMs as ``Implicit Memory Modules''.
\vspace{-2.5mm}
\section{Bias-Variance Tradeoff in Diffusion Models}
\vspace{-2.5mm}
Given the analogy between the implicit memorization in DMs and explicit memorization in \textit{k}-NNs, we can gain an insight on the bias-variance tradeoff in DMs. In \textit{k}-NNs, the value of the hyperparameter \textit{k} determines this tradeoff. Higher value of \textit{k} leads the query point to attend to a larger number of neighbors, which smooths the decision landscape and reduces sensitivity to individual data points. Similarly, in the context of DMs, an initial corrupted noise (comparable to a query in \textit{k}-NN) attends to a number of implicit memories (comparable to explicit memories in \textit{k}-NN) to generate a sample resembling the distribution of a class \footnote{We intentionally treat ``concepts'' as ``classes'' to relate our discussion to \textit{k}-NNs.} (comparable to categorization to a class in \textit{k}-NN.). Analogous to the higher  values of \textit{k} in \textit{k}-NNs, which lead to the smoothening of the decision landscape, the energy landscape of DMs smoothens with higher levels of abstraction in the DM representations (see \Cref{fig:am_vs_dm}). 
Hence, we can expect the bias-variance trade-off in DMs to be linked with the abstraction level of representations.

Consider a dataset \(D=\{x_i, y_i\}_{i=1}^{N} \subset \mathbb{R}^{d}\), where each \(x_i\) is a training sample and \(y_i\) is its corresponding label, and \(Q \in \mathbb{R}^d\) denote a query point (a corrupted signal). In \textit{k}-NN, prediction is based on the average over the \textit{k} closest training points to \(Q\), denoted as \(\mathcal{N}_k(Q) \subset D\). The \textit{k}-NN prediction can be written as:
\begin{equation}
    \hat{y}(Q) = \frac{1}{k} \sum_{x_i \in \mathcal{N}_k(Q)} y_i.
\end{equation}
Alternatively, using a soft weighting scheme based on distances, the prediction 
becomes:
\begin{equation}
\hat{y}_\tau(Q) = \sum_{i=1}^N w_i^{(\tau)}(Q) \, y_i, \quad \text{where} \quad w_i^{(\tau)}(Q) = \frac{\exp\left(-\frac{\| Q - x_i \|^2}{\tau}\right)}{\sum_{j=1}^N \exp\left(-\frac{\| Q - x_j \|^2}{\tau}\right)}.
\end{equation}
Here, \(\tau > 0\) acts as a temperature that controls how broadly the query attends to the dataset: small \(\tau\) approximates hard \textit{k}-NN with small \textit{k}, while large \(\tau\) induces a smoother, more global averaging.

Now, consider an energy-based interpretation of DMs where the learned memories define a landscape \(E : \mathbb{R}^d \to \mathbb{R}\), whose local minima correspond to the data points \(x_i\). Given a corrupted input \(Q\), the reconstruction follows the gradient flow:
\begin{equation}
    \tau\frac{d x}{d t} = -\nabla E(x), \quad x(0) = Q,
\end{equation}
where \(\tau\) controls the rate of change, iteratively mapping \(Q\) to a nearby energy minimum. The smoothness of the energy landscape, defined by the number, sharpness, and separation of local minima, determines how many minima influence the trajectory of \(Q\). In a rugged landscape, the flow quickly collapses into a nearby basin, analogous to small-\(k\) behavior in \textit{k}-NN, where only the most immediate neighbors contribute. In contrast, a smoothed energy landscape causes the flow to be influenced by multiple nearby minima, resembling a \textit{k}-NN with large \textit{k} aggregating information from many neighbors. Thus, \(\tau\) has the same role in DMs as \(k\) in \textit{k}-NNs. A comprehensive set of analogies between DMs and \textit{k}-NNs is given in \Cref{tab:k-NN_DM}.
\begin{table}[]
\caption{Analogies between \textit{k}-NNs and Diffusion Models (DMs).}
\label{tab:k-NN_DM}
\resizebox{\textwidth}{!}{%
\begin{tabular}{lll}
\hline
                           & \textit{k}-NN                                   & DM                                                                          \\ \hline
Storing training data      & explicit                                                         & implicit                                                                    \\ \hline
Inference initialized with & a query                                                          & a corrupted signal                                                          \\ \hline

Inference process          & attends to nearest neighbors                                     & converges to a minimum in nearby memories                     \\ \hline
Smoothing factor           & \textit{k} defined as a hyperparameter & \(\tau\) defined by abstraction level of representation                     \\ \hline
Larger smoothing factor    & smoothens decision boundaries                                  & smoothens the energy landscape                                              \\ \hline
Inference goal             & classify the query                                               & refine corrupted signal to resemble a class\\ \hline
\end{tabular}%
}
\end{table}
Our definition of ``bias-variance tradeoff'' aligns with its traditional counterpart, specifically with \textit{k}-NN classification. The bias and the variance of the model \(\hat{f}(x)\) is defined by 
\begin{equation}
    bias(\hat{f}(x_0)) = \mathbb{E}[\hat{f}(x_0)] - f(x_0)],
\end{equation}
and
\begin{equation}
    variance(\hat{f}(x_0)) = \mathbb{E}\left[(\hat{f}(x_0)- \mathbb{E}[\hat{f}(x_0)])^2\right],
\end{equation}

where specifically in our definition of bias-variance in diffusion models, \(f(x_0)\) is the class of the training image that is encoded to \(x_0\) and \(\hat{f}(x_0)\) is the class of the generated image decoded from \(x_0\).
\vspace{-2mm}
\subsection{Implications of Latent Space Abstraction Level}\label{sec:bias-variance_traeoff}
\vspace{-2mm}
Here, we examine how the energy landscape corresponding to DMs' learned memories gets affected by the abstraction level of their latent representation.

As formally depicted in \Cref{theorem:1}, increasing the level of abstraction in latent representations induces smoother energy landscapes. To support this claim, we analyze the local and global smoothness of the energy function by examining two key quantities: the Hessian norm, which captures local curvature, and the Lipschitz constant of the gradient field, which reflects how rapidly gradients can vary across the space. We show that both the Hessian norm and the Lipschitz constant decrease with higher levels of abstraction.
\begin{theorem}[Higher abstraction in representations induces smoother energy landscapes]
    \label{theorem:1}
    Let \(E: \mathcal{H} \to \mathbb{R} \) be an energy function defined over a space of representations \(\mathcal{H}\), \(\phi^{(a)}: \mathcal{H} \to Z^{(a)}\) be a hierarchy of abstraction maps indexed by level \(a\), where each \(\phi^{(a)}\) is a smooth, contractive transformation, and \(Z^{(a)}\) be a vector space of latent representations after some level of abstraction. Define the energy at abstraction level \(a\) as
    \begin{equation}
        E :=E \circ (\phi^{(a)})^{-1}:Z^{(a)} \to \mathbb{R}.
    \end{equation}
    
    Then the energy landscapes \(\{E^{a}\}_a\) become progressively smoother with increasing \(a\) in the following sense:
    \begin{equation}
        \left\| \nabla^2 E^{(a+1)} \right\| < \left\| \nabla^2 E^{(a)}\right\|, \quad\text{and} \quad L^{(a+1)} < L^{(a)},
    \end{equation}

    where \(\left\| \nabla^2 E^{(a)}\right\|\) denotes the Hessian norm (curvature), and \(L^{(a)}\) is the Lipschitz constant of the gradient \( \nabla E^{(a)}\).
\end{theorem}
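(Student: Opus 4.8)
The plan is to push everything through the chain rule and then let contractivity of the abstraction maps supply a uniform shrink factor strictly below one at each level, so that both the curvature and the gradient-Lipschitz constant contract as $a$ grows.

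First I would set $g^{(a)} := (\phi^{(a)})^{-1}$, so that $E^{(a)} = E \circ g^{(a)}$, write $J^{(a)} := Dg^{(a)}(z)$ for its Jacobian, and differentiate twice:
\begin{equation}
\nabla^2 E^{(a)}(z) \;=\; (J^{(a)})^{\top}\,\nabla^2 E\, J^{(a)} \;+\; \sum_{k} (\partial_k E)\,\nabla^2 g^{(a)}_k(z),
\end{equation}
where $\nabla^2 E$ and $\partial_k E$ are evaluated at $g^{(a)}(z)$, together with the first-order identity $\nabla E^{(a)}(z) = (J^{(a)})^{\top}\nabla E(g^{(a)}(z))$. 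The first (``Gauss--Newton'') term transports the curvature of $E$ through $J^{(a)}$; the second carries the intrinsic curvature of the abstraction map.

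Next I would invoke the hypothesis that successive abstraction steps are smooth and contractive, which I read as: the effective transport map between consecutive levels contracts tangent vectors, so $\|J^{(a)}\| \le c_a$ with $c_{a+1} < c_a \le 1$. Submultiplicativity of the operator norm then gives $\|(J^{(a)})^{\top}\nabla^2 E\, J^{(a)}\| \le c_a^{2}\,\|\nabla^2 E\|$, so the Gauss--Newton term is scaled by $c_a^{2}$ and strictly decreases with $a$ wherever $E$ has nonzero curvature along the range of $J^{(a)}$ --- in the simplest instance, where abstraction drops or averages fine coordinates, $\nabla^2 E^{(a+1)}$ is literally a projection-compression of $\nabla^2 E^{(a)}$ onto the retained directions, whose operator norm cannot exceed the original. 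The second term I would bound by a uniform $C^2$-modulus of the abstraction maps: it vanishes identically in the affine regime, and more generally I would read ``smooth contractive hierarchy'' as including that this curvature too is non-increasing in $a$. Combining the two bounds pointwise in $z$ gives $\|\nabla^2 E^{(a+1)}(z)\| < \|\nabla^2 E^{(a)}(z)\|$; taking the supremum over $z$ (uniform, since $c_a$ and the curvature bound are level constants) and using that the optimal gradient-Lipschitz constant of a $C^2$ function equals $\sup_z \|\nabla^2 E^{(a)}(z)\|$ yields $L^{(a+1)} < L^{(a)}$. One can also obtain the Lipschitz claim directly by estimating $\|\nabla E^{(a)}(z_1)-\nabla E^{(a)}(z_2)\|$ with $\|J^{(a)}\|\le c_a$ and the Lipschitz constant of $\nabla E$, which reproduces the same $c_a^{2}$ factor.

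\textbf{The hard part.} The main obstacle is the second, intrinsic-curvature term $\sum_k(\partial_k E)\,\nabla^2 g^{(a)}_k$: in the genuinely nonlinear case the chain rule unavoidably produces it, and unless the curvature of the abstraction maps is itself controlled and non-increasing it can dominate and break monotonicity. So one has to pin down precisely what ``smooth contractive hierarchy'' guarantees --- e.g. a uniform $C^2$ bound with curvature decaying at least as fast as $c_a$ --- and to be careful about whether the contraction is supplied by $\phi^{(a)}$ or by its inverse in the composition defining $E^{(a)}$, since these have reciprocal Jacobian norms and only one of them can provide the shrink factor. A secondary subtlety is the strict inequality: it needs the contraction to be strict and $E$ to vary nontrivially in the retained directions, so that the factor acts on a non-flat direction rather than trivially.
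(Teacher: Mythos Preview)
Your proposal is essentially the paper's own proof: both compute $\nabla^2 E^{(a)}$ via the chain rule as a Gauss--Newton term $(J^{(a)})^{\top}\nabla^2 E\,J^{(a)}$ plus an intrinsic-curvature term $\sum_k(\partial_k E)\,\nabla^2 g^{(a)}_k$, bound each piece using contractivity of the Jacobian and an assumed decay of the second derivatives of the abstraction map, and then read off the Lipschitz claim from the Hessian bound. The paper packages your ``hard part'' into a separate lemma asserting that both $\|J^{(a)}\|$ and $\|\nabla^2(\phi^{(a)})^{-1}\|$ decrease in $a$, justified by the informal premise that higher abstraction reduces sensitivity to perturbations --- so your caveats about what ``smooth contractive hierarchy'' must actually supply, and about the $\phi^{(a)}$ versus $(\phi^{(a)})^{-1}$ direction of contraction, are well-placed and are not resolved more sharply in the paper than in your sketch.
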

The complete proof of \Cref{theorem:1} is given in the Appendix. As a result of smoothing energy landscapes, some of the nearby minima will be merged into a wider yet shallower minimum basin, which is defined in \Cref{def:1}.

\begin{definition}[Merged Local Minimum]
Let \( E: \mathbb{R}^d \to \mathbb{R} \) be an energy function corresponding to a latent representation. If the energy landscape becomes smoother, then a set of nearby local minima \( \{x_i^*\}_{i=1}^n \) of \( E \) may merge into a single, wider and shallower local minimum \( \tilde{x}^* \) of the smoothed energy function \( \tilde{E} \). Formally, for some small \( \varepsilon > 0 \),
\[\|\phi(x_i^*) - \tilde{x}^*\| \leq \varepsilon, \quad \forall i=1, \dots, n, \]
where \(\phi\) is an abstraction (encoding) map. Hence, \(\tilde{x}^*\) is a representative of the local minima in \(E\) under the abstraction i.e.~in \(\tilde{E}\).
We refer to such a minimum \( \tilde{x}^* \) as a ``Merged Local Minimum''.
\label{def:1}
\end{definition}

The local minima that form a merged local minimum can potentially be from different classes. The question that arises here is what happens if the high-energy corrupted signal ends up in such a new merged minimum. In other words,  what is the probability of generating   each of the classes residing in the merged minimum? Answering this, we show in \Cref{theorem:2} that the generation gets heavily biased in favor of the majority class. 
\begin{theorem}[Merged local minima are biased representatives]
\label{theorem:2}
Let \( E: \mathbb{R}^d \to \mathbb{R} \) be an energy function, and let \( \{ x_i^* \}_{i=1}^n \subset \mathbb{R}^d \) be local minima of \( E \), each associated with a class label \( y_i \in \mathcal{Y} \). Define the convex hull of these minima as:
\begin{equation}    
H_C(\{ x_i^* \}_{i=1}^n) := \left\{ \sum_{i=1}^n \alpha_i x_i^* \mid \alpha_i \geq 0, \sum_{i=1}^n \alpha_i = 1 \right\}.
\end{equation}
Assume that an abstracted or smoothed version of the energy function \( \tilde{E} \) satisfies:
\begin{equation} 
\tilde{E}(x) \approx E(x) \quad \text{for} \quad x \in H_C(\{ x_i^* \}),
\end{equation}
and
\begin{equation}
  \tilde{x}^* := \arg\min_{x \in H_C(\{ x_i^* \}_{i=1}^n))} \tilde{E}(x),
\end{equation}
where \(\tilde{x}^*\) is a global minimum of \( \tilde{E} \) restricted to the convex hull. 
Define the empirical class distribution among the constituents as
\begin{equation}    
P_n(c) := \frac{1}{n} \sum_{i=1}^n I[y_i = c], \quad \text{for each} \quad c \in \mathcal{Y},
\end{equation}
and let \( c_{\text{maj}} := \arg\max_{c \in \mathcal{Y}} P_n(c) \) denote the majority class. Then the classification of the merged minimum \( \tilde{x}^* \) is highly biased toward \( c_{\text{maj}} \), i.e.,
\begin{equation}
\mathbb{P}\left[ f(\tilde{x}^*) = c_{\text{maj}} \right] \gg \mathbb{P}\left[ f(\tilde{x}^*) = c \right], \quad \forall c \in \mathcal{Y} \setminus \{c_{\text{maj}}\},
\end{equation}
where \( f: \mathbb{R}^d \to \mathcal{Y} \) is a labeling function such that \( f(x_i^*) = y_i \).
\end{theorem}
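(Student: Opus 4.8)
\textbf{The plan} is to convert the qualitative hypothesis ``$\tilde E \approx E$ on the hull'' into an explicit surrogate for $\tilde E$ assembled from the constituent basins, to locate $\tilde x^*$ for that surrogate, and then to read off its class with the same soft classifier that was used for $k$-NN earlier in the paper. First I would Taylor-expand $E$ around each $x_i^*$: since each is a local minimum, $E(x) \approx E(x_i^*) + \tfrac12 (x - x_i^*)^{\top} H_i (x - x_i^*)$ with $H_i \succeq 0$. A smoothed landscape that still resolves all $n$ basins on the hull is then naturally modeled by the soft-min (log-sum-exp) aggregate
\begin{equation}
\tilde{E}(x) \;\approx\; -\tau \log \sum_{i=1}^{n} \exp\!\Big( -\tfrac{1}{\tau}\big( E(x_i^*) + \tfrac{1}{2}(x - x_i^*)^{\top} H_i (x - x_i^*) \big) \Big),
\end{equation}
whose temperature $\tau$ is exactly the abstraction/smoothing parameter of the earlier discussion, and whose curvature is, by \Cref{theorem:1}, strictly below that of every individual bowl. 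Imposing $\nabla \tilde E(\tilde x^*) = 0$ yields the self-consistency relation $\sum_i w_i(\tilde x^*)\, H_i (\tilde x^* - x_i^*) = 0$, with weights $w_i$ of the same softmax form as the $w_i^{(\tau)}$ introduced above; hence $\tilde x^*$ is a softmax-weighted (and curvature-weighted) centroid of the $x_i^*$, and in particular $\tilde x^* \in H_C(\{x_i^*\})$, as the hypothesis requires.

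Next I would pass to the smooth regime that \Cref{theorem:1} pushes us toward: the weights flatten, $w_i(\tilde x^*) \to 1/n$ (and similarly when the $H_i$ are comparable), so $\tilde x^* \to \bar x := \tfrac1n \sum_{i=1}^n x_i^*$, the plain centroid of the constituents. Grouping by class, $\bar x = \sum_{c} P_n(c)\, \mu_c$, where $\mu_c$ is the mean of the class-$c$ minima; thus $\bar x$ is dragged toward $\mu_{c_{\text{maj}}}$ in proportion to $P_n(c_{\text{maj}})$, and, because the majority minima form the denser cluster, $\bar x$ typically sits closer to them than to any minority minimum. Reading off the class of $\tilde x^*$ with the soft, distance-based rule already used for $k$-NN — namely $\mathbb{P}[f(\tilde x^*) = c] = \sum_{i : y_i = c} w_i(\tilde x^*)$ — this equals $P_n(c)$ in the uniform limit $\tilde x^* = \bar x$ and is further tilted toward $c_{\text{maj}}$ at any finite temperature because the majority minima are, on average, nearer to $\bar x$. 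Therefore $\mathbb{P}[f(\tilde x^*) = c_{\text{maj}}] \ge \max_c P_n(c)$, with the distance-tilt upgrading the $\ge$ to the strict domination $\gg$ claimed in the statement, and with the gap widening as the landscape is smoothed further (larger $\tau$, more basins merged).

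The delicate point — the main obstacle — is that, as written, $\tilde x^*$ is a deterministic object and $f$ is specified only on the $x_i^*$, so $\mathbb{P}[f(\tilde x^*) = c]$ must first be given meaning. One must fix a model: either promote $f$ to the stochastic soft classifier above (randomness in the readout), or treat the $x_i^*$ as generic/random draws from class-conditional distributions (randomness in the configuration), or treat the terminal point of the gradient flow inside the merged basin as random with density proportional to basin occupancy. Having fixed one such model, the real work is the \emph{amplification} claim: showing that the probability ratio $\mathbb{P}[f = c_{\text{maj}}]/\mathbb{P}[f = c]$ strictly exceeds the count ratio $P_n(c_{\text{maj}})/P_n(c)$ rather than merely matching it. This rests on a quantitative form of ``the centroid lies in the denser (majority) region,'' obtainable from a nearest-neighbour / Voronoi-volume estimate, or from a concentration argument as $n \to \infty$ with the class proportions held fixed. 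Everything else — the quadratic expansion, the log-sum-exp surrogate, the centroid characterization of $\tilde x^*$, and the curvature decrease — is routine given \Cref{theorem:1}.
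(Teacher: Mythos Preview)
Your route is genuinely different from the paper's, and the place where you yourself flag ``the main obstacle'' is exactly the idea the paper supplies and you do not.

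Your argument is geometric: model $\tilde E$ by a log-sum-exp of quadratic bowls, identify $\tilde x^*$ with a (softmax-weighted) centroid $\bar x$, and classify $\bar x$ with the soft distance rule. That cleanly yields $\mathbb{P}[f(\tilde x^*)=c]\approx P_n(c)$ in the uniform limit, i.e.\ the merged minimum \emph{reproduces} the class proportions. To upgrade this to the claimed strict amplification $\gg$, you appeal to a ``distance-tilt'' (the centroid sits nearer the denser cluster), but you concede this is the hard, unproved step. As written, your machinery delivers $\ge P_n(c_{\text{maj}})/P_n(c)$ on the odds ratio, not the required strict growth beyond it; and without extra structural assumptions on how the class-conditional minima are laid out, the centroid can sit arbitrarily close to the boundary between the class clusters, so the tilt need not be large.

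The paper bypasses geometry entirely. It restricts to two classes with counts $p>q$ and introduces a new modeling ingredient: the generated image is determined by $S$ \emph{exclusive semantic features}, and because a well-trained model cannot output a class mixture, all $S$ features must be drawn from the same class. Treating each feature as chosen with the empirical odds $p/q$, the post-merge odds become $(p/q)^S$, so for any $S>1$ the odds ratio is strictly amplified and for moderate $S$ it is $\gg p/q$. The entire ``amplification beyond the count ratio'' that blocks your argument is thus obtained in one line, at the price of an unstated independence assumption across features. In short: your surrogate-energy/centroid picture is more explicit about where $\tilde x^*$ lives but stalls at reproduction of $P_n$; the paper's semantic-feature decomposition is the missing mechanism that turns a single class draw into $S$ multiplicative draws and produces the exponential bias amplification the theorem asserts.
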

\begin{proof}
    Let \( \tilde{x}^* \) be a merged local minimum basin formed by smoothing the energy landscape containing \( n \) local minima, with the number of local minima corresponding to classes \( A \) and \( B \) being \( p \) and \( q \), respectively. The initial probability of convergence to classes \( A \) and \( B \) before smoothing is
    \vspace{-.5mm}
    \begin{equation}
    \mathbb{P}_{\text{init}}(A) = \frac{p}{n}, \quad \mathbb{P}_{\text{init}}(B) = \frac{q}{n}.
    \vspace{-0.5mm}
    \end{equation}
    The initial odds ratio of class \( A \) to class \(B\) is given by
    \begin{equation}
    \lambda_{\text{init}}(A) = \frac{p}{q}.
    \vspace{-2mm}
    \end{equation}
    After merging the minima, the generated image will belong to class \( A \), class \( B \), or a mixture of both. However, as long as classes \( A \) and \( B \) are distinguishable and the model is well-trained, the generation of a mixture of classes is not possible. Therefore, the generated image must either be from class \( A \) or from class \( B \).

    To avoid generating a mixture, the model must select all exclusive semantic features from one class. Thus, to generate an image from class \( A \), all exclusive features must be selected from class \( A \), each with an odds ratio of \( \frac{p}{q} \). Let \( S \) be the number of exclusive semantic features. The odds ratio of generating an image from class \( A \) to class \(B\) after smoothing is
    \vspace{-2mm}
    \begin{equation}
    \lambda_{\text{smooth}}(A) = \left( \frac{p}{q} \right)^S.
    \vspace{-2mm}
    \end{equation}
    If \( p > q \), for \( S > 1 \) we have
    \vspace{-2mm}
    \begin{equation}
    \lambda_{\text{smooth}}(A) = \left( \frac{p}{q} \right)^S > \frac{p}{q} = \lambda_{\text{init}}(A),
    \vspace{-2mm}
    \end{equation}
    and for a sufficiently large \( S \), 
    \begin{equation}
        \lambda_{\text{smooth}}(A) \gg \lambda_{\text{init}}(A).
    \end{equation}
    So,
    \begin{equation}
    \mathbb{P}_{\text{smooth}}(A) \gg \mathbb{P}_{\text{smooth}}(B),
    \end{equation}
    where \(\mathbb{P}_{\text{smooth}}(A)\) and \(\mathbb{P}_{\text{smooth}}(B)\) are probabilities of convergence to classes A and B after smoothing, respectively.
\vspace{-2mm}
\end{proof}
\vspace{-1mm}
\Cref{fig:dominating-majority} illustrates how biased representatives exacerbate bias. It shows five energy landscapes with different proportions of classes, getting gradually more abstract (left to right). Based on Theorem~\ref{theorem:2}, we represent nearby classes as the majority class (random in case of a tie) in the more abstract representations. It can be seen that while class proportions remain almost identical in the balanced case, the majority class dominates with higher levels of abstraction in unbalanced cases. Also, the stronger the initial bias, the greater the bias amplification induced by representation abstraction. 
\begin{figure}
        \begin{minipage}[c]{0.6\textwidth}
            \caption{(Illustration of bias amplification.) Five different energy landscapes containing minima for two classes: ``red'' and ``blue'' with corresponding proportions of 50\%-50\%, 60\%-40\%, 70\%-30\%, 80\%-20\%, and 90\%-10\% are shown (left column). Each of the landscapes gets gradually more abstract, and the class of the minima is decided based on Theorem~\ref{theorem:2} by getting a local average over classes using a 2x2 window (random in case of tie in the average). The proportion of minima for the 50\%-50\% case remains almost the same for every level of abstraction (see first row). However, for biased cases, the bias gets amplified, and the amplification rate gets exacerbated for higher levels of initial bias (see rows 2 to 5). The percentage of the ``red'' (majority) class is given above each landscape.}
            \label{fig:dominating-majority}
        \end{minipage}
        \hfill
        \begin{minipage}[c]{0.35\textwidth}
        \vspace{-5mm}
            \includegraphics[width=\linewidth]{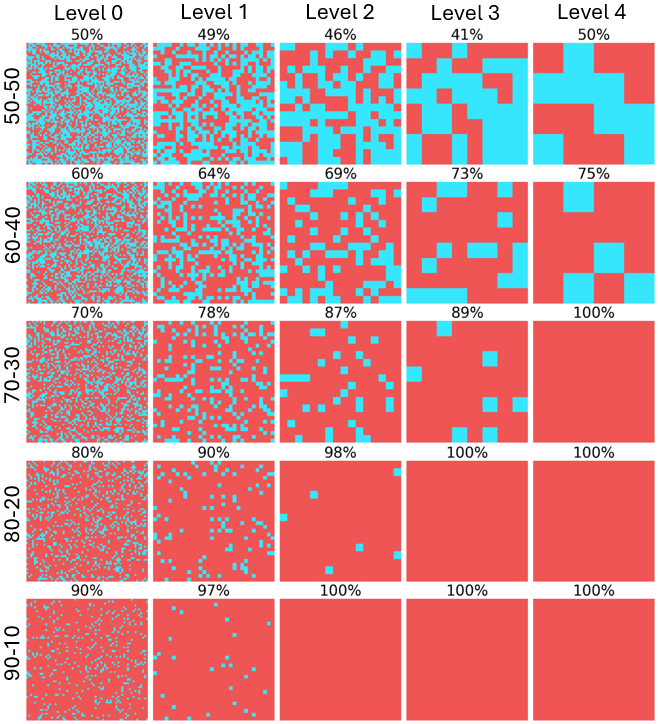}
        \end{minipage}
\end{figure}
\subsection{Latent Space Abstraction Level Influences Bias-Variance Tradeoff in Diffusion Models}
\vspace{-2mm}
Based on \Cref{theorem:1} and \Cref{theorem:2}, we claim that the  bias-variance tradeoff in diffusion models is directly influenced by the level of abstraction of the latent space representation.

\vspace{-0.1mm}
\textbf{High abstraction levels of representations lead to ``high bias''.}        Based on \Cref{theorem:1}, highly abstracted representation makes the energy landscape corresponding to the DM's learned memories smoother, and this smoothness results in the emergence of merged minima basins. \Cref{theorem:2} establishes that converging to those merged minima basins amplifies the probability of generating  majority class samples considerably. 
Consequently, the generated images may belong to the majority class with a higher probability than the original proportion of that class. That is, the bias gets amplified.
\textbf{Low abstraction levels of representations lead to ``high variance''.}   Low abstraction levels lead to uneven energy landscapes with numerous individual local minima, which are representatives of less aggregated and more individual data points. The convergence behavior of the corrupted signal is very sensitive to the  presence or absence of individual training samples in this case. As a result, this will be the state of ``high variance''. 
\vspace{-2mm}
\subsection{``High Bias State'' Leads to Amplification of the Inherent Bias in Datasets}
\vspace{-1mm}
To verify the established relationship between high levels of abstraction and ``high bias'', we designed and conducted a series of experiments on MNIST and CelebA datasets to further examine this connection.

First, we trained a series of unconditional diffusion models on custom MNIST datasets containing only digits ``6'' and ``9'' with different proportions. The base case is a diffusion model with two encoding blocks  trained on a biased dataset containing 90\%  samples of class ``6'' and 10\% samples of class ``9''. We further trained increasingly deeper models on the same biased dataset. The models consist of three, four, five, and six encoding blocks. Interestingly, we observed that while the percentage of generated images for the base model almost aligns with the proportions of the training dataset, deeper models; which leverage increasingly more abstract representations, tend to amplify the bias. The same experiments are also conducted on  datasets consisting 80-20, 70-30, and 60-40 percent of ``6''s and ``9''s, respectively, and the results were consistent. We also used Learned Perceptual Image Patch Similarity (LPIPS) metric \cite{zhang2018unreasonable} to measure the diversity of generated images. For each setting, we generated 1000 images and calculated the LPIPS metric for each pair. Eventually, the average LPIPS is reported, which showed higher diversity for deeper architectures (see Figure~\ref{fig:bias-amplify_MNIST}). 
\begin{figure}
        \begin{minipage}[c]{0.56\textwidth}
            \caption{\% of generated images from the minority class, and the average LPIPS metric to measure their diversity. Five DMs with 2 to 6 encoding blocks each trained on four custom MNIST datasets contained only ``6''s and ``9''s with biased proportions (see legend). Deeper models, that make more abstract representations, generate more diverse images (higher average LPIPS) and amplify the bias.
            ``High variance'' and ``high bias'' states can be identified on ``shallow'' and ``deep'' models, respectively.}
            \label{fig:bias-amplify_MNIST}
        \end{minipage}
        \hfill
        \begin{minipage}[c]{0.4\textwidth}
            \includegraphics[width=\linewidth]{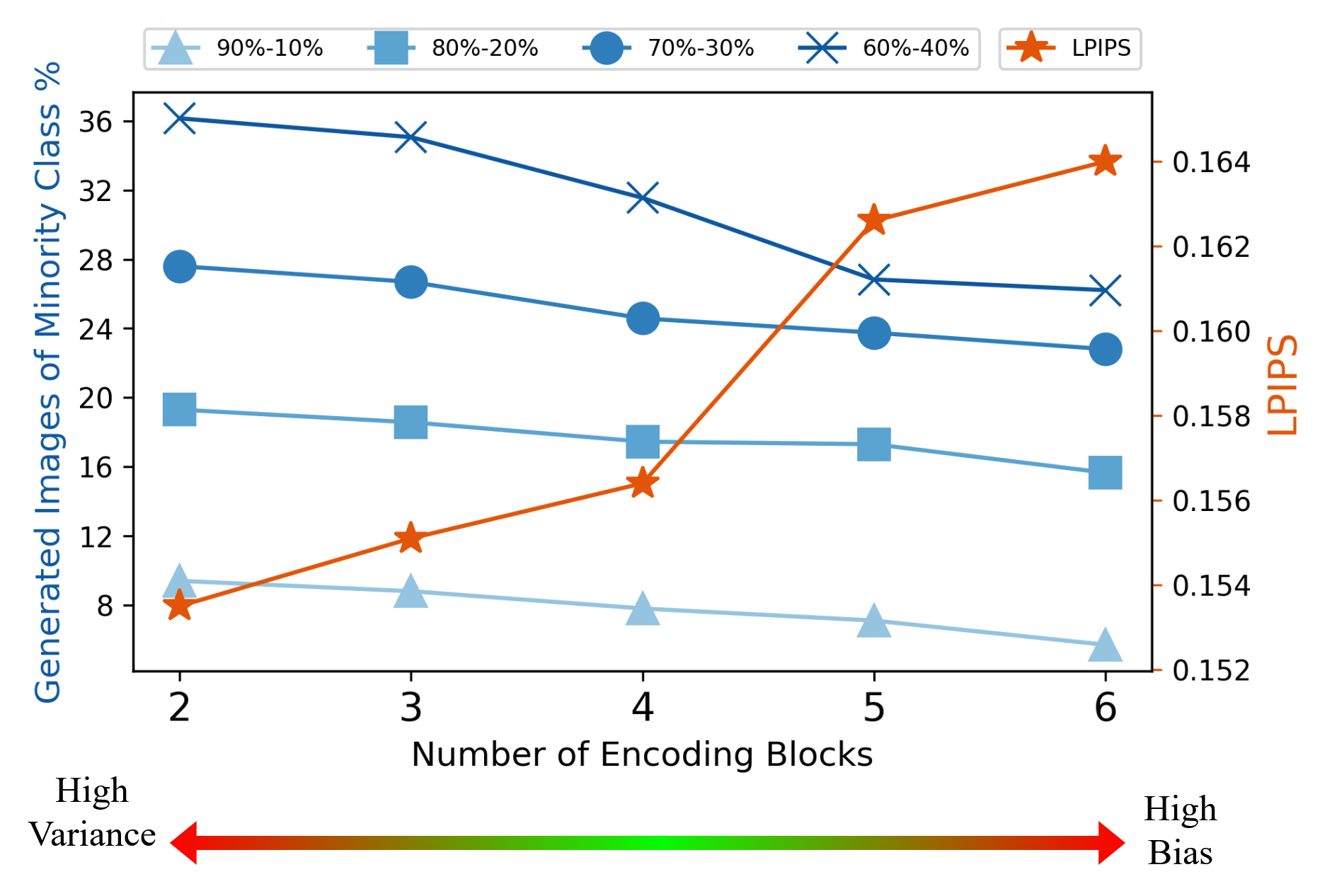}
        \end{minipage}
\end{figure}
\begin{table}[]
\centering
\caption{Percentage of images generated by unconditional DMs trained on CelebA. Higher abstraction of representations that is achieved by a larger number of encoding blocks leads to amplification of bias, as well as more diversity in generating images.}
\label{tab:my-table}
\resizebox{0.83\textwidth}{!}{%
\begin{tabular}{@{}cc|cccccc@{}}
\toprule
\multicolumn{1}{c|}{}       & \multirow{2}{*}{Dataset Proportions} & \multicolumn{6}{c}{\# of Encoding Blocks}               \\ \cmidrule(l){3-8} 
\multicolumn{1}{c|}{}       &                                      & 2       & 2 (large) & 3       & 4       & 5       & 6       \\ \midrule
\multicolumn{1}{c|}{Male}   & 41.68\%                              & 40.33\% & 40.85\%   & 39.18\% & 37.22\% & 30.36\% & 28.85\% \\ \midrule
\multicolumn{1}{c|}{Female} & 58.32\%                              & 59.67\% & 59.15\%   & 60.82\% & 62.78\% & 69.64\% & 71.15\% \\ \midrule
\multicolumn{2}{c|}{Average LPIPS}                                 & 22.91   & 23.26     & 23.17   & 25.65   & 28.03   & 29.12   \\ \bottomrule
\end{tabular}%
}
\label{tab:celeba-bias}
\end{table}

To validate our observation further, we trained a series of increasingly deeper DMs on CelebA dataset~\cite{liu2015faceattributes} and generated 10,000 images. As presented in \Cref{tab:celeba-bias}, the proportions of generated images across the two classes of ``Male'', ``Female'' support our claims. Although the proportion of ``Female'' samples to ``Male'' samples is \(58.32\%\) to \(41.68\%\) in the training data, the bias amplifies as the number of encoding blocks increases. Skeptically, this observation could be caused by over-parameterization. 
Hence, we repeated the experiment on a large 2-encoding-block model that has  almost the same number of parameters as the 5-encoding-block model (see Table \ref{tab:supp_implementation} in Appendix) . The results remain largely  similar to the small 2-encoding-block model we experimented with earlier which confirms that it is not the number of parameters, but the abstraction level of representations that is responsible for the observed bias amplification. Please note that the results for the classes that are combinations of \textit{genders} and \textit{hair colors} are not reported, because in that case the minority class (blond male) contributes to less that \(1\%\) of the training data, and getting a meaningful result of its changes required generating millions of samples, so we reported the results for ``gender'' classes.

\Cref{fig:bias-CelebA} shows 10 consecutive generations of two models with 4 and 6 encoding blocks, trained on CelebA starting from the same seed. In the two highlighted sample pairs, the generated images differ: the deeper model generates examples from majority classes - a ``non-blond male'' that makes up 40.81\% of the dataset, instead of a ``blond male'', which represents only 0.86\% of the dataset; and a ``female'' that constitutes 58.32\% of the dataset, instead of a ``male'' that accounts for 41.68\%. These results are perfectly explained by our bias amplification insight. 
\begin{figure}
    \centering
    \includegraphics[width=\linewidth]{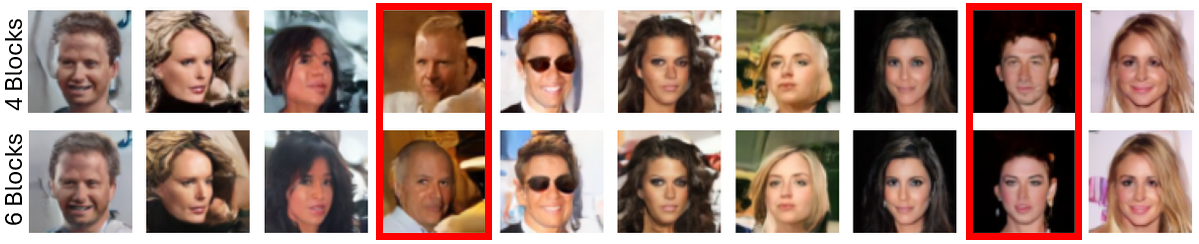}
    \caption{10 unconditional consecutive generations of two DMs trained on CelebA, starting from the same seeds: one with 6-encoding blocks and one with 4-encoding blocks. The deeper model tends to follow the biased data distribution in CelebA. While most outputs are visually similar, two highlighted cases differ: the 6-encoding block model favors the majority classes, producing non-blond males over blond males (40.81 \% vs. 0.86 \%) and females over males (58.32 \% vs. 41.68 \%).}
    \label{fig:bias-CelebA}
    \vspace{-3mm}
\end{figure}
\vspace{-2mm}
\subsection{``High Variance State'' Undermines Privacy Preservation in Diffusion Models}
\vspace{-2mm}
\begin{table}[]
\centering
\caption{Average cosine similarity in Inception feature space and average Euclidean distance in pixel space between generated images and their corresponding \(k\) nearest neighbors in the CelebA training dataset. The generated images from shallower models consistently show higher cosine similarity and lower Euclidean distance in the Inception feature and pixel spaces, respectively, indicating their greater tendency to expose training data information. }
\label{tab:NN}
\resizebox{.9\textwidth}{!}{%
\begin{tabular}{@{}cll|cccc|cccc@{}}
\toprule
\multicolumn{3}{c|}{\multirow{2}{*}{\begin{tabular}[c]{@{}c@{}}\# of \\ Encoding \\ Blocks\end{tabular}}} & \multicolumn{4}{c|}{\begin{tabular}[c]{@{}c@{}}Avg. cosSim in Inception Feature Space \\ for \(k\) Nearest Neighbors\end{tabular}} & \multicolumn{4}{c}{\begin{tabular}[c]{@{}c@{}}Avg. Euclidean Distance \\ in Pixel Space for \(k\) Nearest Neighbors\end{tabular}} \\ \cmidrule(l){4-11} 
\multicolumn{3}{c|}{}                                                                                     & \(k=1\)                             & \(k=2\)                             & \(k=5\)                            & \(k=10\)                           & \(k=1\)                          & \(k=2\)                          & \(k=5\)                          & \(k=10\)                         \\ \midrule
\multicolumn{3}{c|}{3}                                                                                    & 0.8751                          & 0.8612                          & 0.8567                         & 0.8543                         & 97.78                        & 99.89                        & 113.01                       & 116.14                       \\
\multicolumn{3}{c|}{4}                                                                                    & 0.8644                          & 0.8541                          & 0.8506                         & 0.8420                         & 106.15                       & 106.25                       & 113.11                       & 115.03                       \\
\multicolumn{3}{c|}{5}                                                                                    & 0.8428                          & 0.8355                          & 0.8277                         & 0.8265                         & 106.62                       & 106.87                       & 107.25                       & 117.01                       \\
\multicolumn{3}{c|}{6}                                                                                    & 0.8142                          & 0.8112                          & 0.8056                         & 0.8016                         & 121.41                       & 121.84                       & 129.91                       & 143.05                       \\ \bottomrule
\end{tabular}%
}
\end{table}

We also investigated the state of ``high variance'' in which the model generates images that more closely resemble specific samples of the training data. Table~\ref{tab:NN} gives the average result of measured similarities between the generated images and their nearest neighbors in the training data. For this comparison, cosine similarity in Inception feature space and Euclidean distance in pixel space are used. The experiments are carried on for \(k=1, 2, 5, \text{and} \ 10\). The results clearly show in general, shallower models tend to generate images resembling some specific samples of the training data more than deeper ones, as they yield larger cosine similarities between their features in Inception feature space and their nearest neighbors, while having smaller Euclidean distance in pixel space. This observation emphasizes the vulnerability of shallower models with respect to the privacy of training data in diffusion models.

Also, relevant to privacy preserving of training data is the capability of models in generating diverse images. \Cref{fig:bias-amplify_MNIST} and Table~\ref{tab:celeba-bias} give average mutual LPIPS scores for generated images by models trained on MNIST and CelebA, respectively. The results consistently show lower LPIPS for shallower models, indicating their limited ability to generate diverse outputs. 

As a qualitative experiment, we also tried to force the pretrained model to work with less abstract data representations to simulate the ``high bias'' state. To do so, we bypassed the mid-block of the U-Net of SD V1.5. This way, the model relies on the less abstract representations flowing via skip connections. \Cref{fig:high-variance} shows the images generated by SD V1.5 for three prompts: ``a woman'', ``a girl'', and ``an actress''.  We show the results as percentage bypassed mid-block activations in  gradually increasing proportions, up to complete bypassing (100\%). 
As can be observed, the complete bypass  results in generating almost similar images for different prompt. This follows naturally from our bias-variance analysis in the previous sections. It entails that promoting higher variance of pre-trained models by bypassing can put the (presumed) privacy of DMs at risk.
\vspace{-2mm}

\begin{figure}
        \begin{minipage}[c]{0.55\textwidth}
            \caption{Images generated with SD v1.5 for three similar prompts: ``a woman'', ``a girl'', and ``an actress'', using the same seed (left column). Gradually bypassing the model’s mid-block makes the outputs more similar and realistic, as the model relies on less abstract representations, inducing ``high-variance'' state, where the noisy signal attends to only a few attractors, reflecting specific memorized patterns.}
            \label{fig:high-variance}
        \end{minipage}
        \hfill
        \begin{minipage}[c]{0.43\textwidth}
        \vspace{-3mm}
            \includegraphics[width=\linewidth]{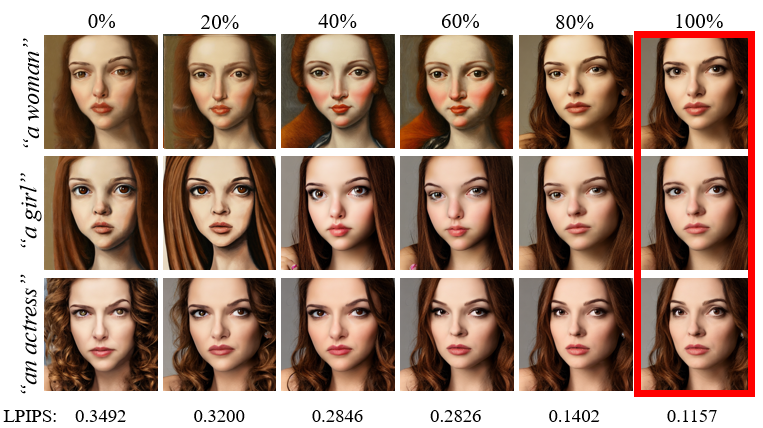}
        \end{minipage}
    \vspace{-3mm}
\end{figure}
\vspace{-2mm}
\section{Related Work}
\label{sec:related_work}
\vspace{-2mm}
\textbf{Explanation of Diffusion Models} Many recent studies have focused on explanation of the underlying mechanisms of DMs. Yi \textit{et~al.} \cite{yitowards} showed that the generation of images in DMs starts with low frequency signals in initial steps, followed by the construction of high frequency signals in later steps. This behavior leads to the generation of the overall image first and details later in the denoising process. The memorization-generalization properties of DMs are explored in \cite{li2023generalization}, \cite{li2024understanding}, and \cite{yoon2023diffusion}. Furthermore, Vastola \cite{vastola2025generalization} explored the elements influencing the generalization of DMs. Kwon \textit{et~al.} \cite{kwondiffusion} uncovered the semantic latent space in DMs, which is further explored in \cite{park2023understanding} and \cite{wang2025exploring}. As a more intuitive alternative to the common explanation of DM models' generation as \textit{iterative denoising}, Hoover \textit{et~al.} \cite{hoover2023memory} proposed to conceptualize the inference process as \textit{memory retrieval}. 

\textbf{Bias Mitigation in Diffusion Models}
DMs are accused of amplifying the biases inherent in their training data \cite{friedrich2023fair, bianchi2023easily}. However,
Seshjadri \textit{et~al.} \cite{seshadri2023bias} challenged this concern and argued that in the text-to-image case, bias exacerbation stems from distribution shift in training captions and prompts. Conversely, as suggested by Chen \textit{et~al.}, the bias amplification is more serious than presumed and can potentially results in the cascading of bias towards future models\cite{chen2024would}.  

\textbf{Privacy of Diffusion Models}
To preserve the confidentiality of potentially private dataset samples, Jahanian \textit{et~al.} \cite{jahaniangenerative} proposed using generative models to have variants of samples instead of explicitly storing and working with real datasets. However, Carlini \textit{et~al.} \cite{carlini2023extracting} revealed that DMs are prone to generate images highly resembling their training data. Also, it is argued that DMs can be pushed to give away their training samples using membership inference method \cite{duan2023diffusion}. In response to the challenge, robust DMs that address privacy concerns are proposed \cite{dockhorndifferentially, wang2024dp, chen2024towards}.  Moreover, the inherent privacy guarantees and conditions affecting privacy-preserving properties of datasets generated by DMs are analyzed in \cite{weiinherent}.
\vspace{-3mm}
\section{Conclusions}
\label{sec:conclusions}
\vspace{-4mm}
In this work, we explored the concept of ``bias-variance trade-off'' in the context of DMs, which explains bias amplification at one extreme and privacy risks as well as reduced generative diversity at the other. Our arguments are aligned with the opposing concepts of ``memorization'' and ``generalization'', but it further captures the state beyond generalization. This study revealed the potential drawbacks of extremely deep DMs, not limited to computational cost. We also showed the vulnerability of pre-trained DMs to be pushed to ``high variance'' state in order to get information of their training data. 

\bibliography{iclr2026_conference}

\begin{thebibliography}{33}
\providecommand{\natexlab}[1]{#1}
\providecommand{\url}[1]{\texttt{#1}}
\expandafter\ifx\csname urlstyle\endcsname\relax
  \providecommand{\doi}[1]{doi: #1}\else
  \providecommand{\doi}{doi: \begingroup \urlstyle{rm}\Url}\fi

\bibitem[Bianchi et~al.(2023)Bianchi, Kalluri, Durmus, Ladhak, Cheng, Nozza, Hashimoto, Jurafsky, Zou, and Caliskan]{bianchi2023easily}
Federico Bianchi, Pratyusha Kalluri, Esin Durmus, Faisal Ladhak, Myra Cheng, Debora Nozza, Tatsunori Hashimoto, Dan Jurafsky, James Zou, and Aylin Caliskan.
\newblock Easily accessible text-to-image generation amplifies demographic stereotypes at large scale.
\newblock In \emph{Proceedings of the 2023 ACM Conference on Fairness, Accountability, and Transparency}, pp.\  1493--1504, 2023.

\bibitem[Carlini et~al.(2023)Carlini, Hayes, Nasr, Jagielski, Sehwag, Tramer, Balle, Ippolito, and Wallace]{carlini2023extracting}
Nicolas Carlini, Jamie Hayes, Milad Nasr, Matthew Jagielski, Vikash Sehwag, Florian Tramer, Borja Balle, Daphne Ippolito, and Eric Wallace.
\newblock Extracting training data from diffusion models.
\newblock In \emph{32nd USENIX Security Symposium (USENIX Security 23)}, pp.\  5253--5270, 2023.

\bibitem[Chen et~al.(2024{\natexlab{a}})Chen, Liu, and Xu]{chen2024towards}
Chen Chen, Daochang Liu, and Chang Xu.
\newblock Towards memorization-free diffusion models.
\newblock In \emph{Proceedings of the IEEE/CVF Conference on Computer Vision and Pattern Recognition}, pp.\  8425--8434, 2024{\natexlab{a}}.

\bibitem[Chen et~al.(2024{\natexlab{b}})Chen, Hirota, Otani, Garcia, and Nakashima]{chen2024would}
Tianwei Chen, Yusuke Hirota, Mayu Otani, Noa Garcia, and Yuta Nakashima.
\newblock Would deep generative models amplify bias in future models?
\newblock In \emph{Proceedings of the IEEE/CVF Conference on Computer Vision and Pattern Recognition}, pp.\  10833--10843, 2024{\natexlab{b}}.

\bibitem[Dhariwal \& Nichol(2021)Dhariwal and Nichol]{dhariwal2021diffusion}
Prafulla Dhariwal and Alexander Nichol.
\newblock Diffusion models beat gans on image synthesis.
\newblock \emph{Advances in neural information processing systems}, 34:\penalty0 8780--8794, 2021.

\bibitem[Dockhorn et~al.(2023)Dockhorn, Cao, Vahdat, and Kreis]{dockhorndifferentially}
Tim Dockhorn, Tianshi Cao, Arash Vahdat, and Karsten Kreis.
\newblock Differentially private diffusion models.
\newblock \emph{Transactions on Machine Learning Research}, 2023.

\bibitem[Duan et~al.(2023)Duan, Kong, Wang, Shi, and Xu]{duan2023diffusion}
Jinhao Duan, Fei Kong, Shiqi Wang, Xiaoshuang Shi, and Kaidi Xu.
\newblock Are diffusion models vulnerable to membership inference attacks?
\newblock In \emph{International Conference on Machine Learning}, pp.\  8717--8730. PMLR, 2023.

\bibitem[Esser et~al.(2021)Esser, Rombach, and Ommer]{esser2021taming}
Patrick Esser, Robin Rombach, and Bjorn Ommer.
\newblock Taming transformers for high-resolution image synthesis.
\newblock In \emph{Proceedings of the IEEE/CVF conference on computer vision and pattern recognition}, pp.\  12873--12883, 2021.

\bibitem[Friedrich et~al.(2023)Friedrich, Brack, Struppek, Hintersdorf, Schramowski, Luccioni, and Kersting]{friedrich2023fair}
Felix Friedrich, Manuel Brack, Lukas Struppek, Dominik Hintersdorf, Patrick Schramowski, Sasha Luccioni, and Kristian Kersting.
\newblock Fair diffusion: Instructing text-to-image generation models on fairness.
\newblock \emph{arXiv preprint arXiv:2302.10893}, 2023.

\bibitem[Geman et~al.(1992)Geman, Bienenstock, and Doursat]{geman1992neural}
Stuart Geman, Elie Bienenstock, and Ren{\'e} Doursat.
\newblock Neural networks and the bias/variance dilemma.
\newblock \emph{Neural computation}, 4\penalty0 (1):\penalty0 1--58, 1992.

\bibitem[Hoover et~al.(2023)Hoover, Strobelt, Krotov, Hoffman, Kira, and Chau]{hoover2023memory}
Benjamin Hoover, Hendrik Strobelt, Dmitry Krotov, Judy Hoffman, Zsolt Kira, and Polo Chau.
\newblock Memory in plain sight: A survey of the uncanny resemblances between diffusion models and associative memories.
\newblock In \emph{Annual Conference on Neural Information Processing Systems}, 2023.

\bibitem[Huang et~al.(2023)Huang, Huang, Yang, Ren, Liu, Li, Ye, Liu, Yin, and Zhao]{huang2023make}
Rongjie Huang, Jiawei Huang, Dongchao Yang, Yi~Ren, Luping Liu, Mingze Li, Zhenhui Ye, Jinglin Liu, Xiang Yin, and Zhou Zhao.
\newblock Make-an-audio: Text-to-audio generation with prompt-enhanced diffusion models.
\newblock In \emph{International Conference on Machine Learning}, pp.\  13916--13932. PMLR, 2023.

\bibitem[Jahanian et~al.(2022)Jahanian, Puig, Tian, and Isola]{jahaniangenerative}
Ali Jahanian, Xavier Puig, Yonglong Tian, and Phillip Isola.
\newblock Generative models as a data source for multiview representation learning.
\newblock In \emph{International Conference on Learning Representations}, 2022.

\bibitem[Krotov \& Hopfield(2021)Krotov and Hopfield]{krotov2021large}
Dmitry Krotov and John~J Hopfield.
\newblock Large associative memory problem in neurobiology and machine learning.
\newblock In \emph{International Conference on Learning Representations}, 2021.

\bibitem[Kwon et~al.(2023)Kwon, Jeong, and Uh]{kwondiffusion}
Mingi Kwon, Jaeseok Jeong, and Youngjung Uh.
\newblock Diffusion models already have a semantic latent space.
\newblock In \emph{The Eleventh International Conference on Learning Representations}, 2023.

\bibitem[Li et~al.(2023)Li, Li, Zhang, and Bian]{li2023generalization}
Puheng Li, Zhong Li, Huishuai Zhang, and Jiang Bian.
\newblock On the generalization properties of diffusion models.
\newblock \emph{Advances in Neural Information Processing Systems}, 36:\penalty0 2097--2127, 2023.

\bibitem[Li et~al.(2022)Li, Thickstun, Gulrajani, Liang, and Hashimoto]{li2022diffusion}
Xiang Li, John Thickstun, Ishaan Gulrajani, Percy~S Liang, and Tatsunori~B Hashimoto.
\newblock Diffusion-lm improves controllable text generation.
\newblock \emph{Advances in neural information processing systems}, 35:\penalty0 4328--4343, 2022.

\bibitem[Li et~al.(2024)Li, Dai, and Qu]{li2024understanding}
Xiang Li, Yixiang Dai, and Qing Qu.
\newblock Understanding generalizability of diffusion models requires rethinking the hidden gaussian structure.
\newblock \emph{Advances in Neural Information Processing Systems}, 37:\penalty0 57499--57538, 2024.

\bibitem[Liu et~al.(2023)Liu, Chen, Yuan, Mei, Liu, Mandic, Wang, and Plumbley]{liu2023audioldm}
Haohe Liu, Zehua Chen, Yi~Yuan, Xinhao Mei, Xubo Liu, Danilo Mandic, Wenwu Wang, and Mark~D Plumbley.
\newblock Audioldm: Text-to-audio generation with latent diffusion models.
\newblock \emph{arXiv preprint arXiv:2301.12503}, 2023.

\bibitem[Liu et~al.(2015)Liu, Luo, Wang, and Tang]{liu2015faceattributes}
Ziwei Liu, Ping Luo, Xiaogang Wang, and Xiaoou Tang.
\newblock Deep learning face attributes in the wild.
\newblock In \emph{Proceedings of International Conference on Computer Vision (ICCV)}, December 2015.

\bibitem[Park et~al.(2023)Park, Kwon, Choi, Jo, and Uh]{park2023understanding}
Yong-Hyun Park, Mingi Kwon, Jaewoong Choi, Junghyo Jo, and Youngjung Uh.
\newblock Understanding the latent space of diffusion models through the lens of riemannian geometry.
\newblock \emph{Advances in Neural Information Processing Systems}, 36:\penalty0 24129--24142, 2023.

\bibitem[Podell et~al.(2023)Podell, English, Lacey, Blattmann, Dockhorn, M{\"u}ller, Penna, and Rombach]{podell2023sdxl}
Dustin Podell, Zion English, Kyle Lacey, Andreas Blattmann, Tim Dockhorn, Jonas M{\"u}ller, Joe Penna, and Robin Rombach.
\newblock Sdxl: Improving latent diffusion models for high-resolution image synthesis.
\newblock \emph{arXiv preprint arXiv:2307.01952}, 2023.

\bibitem[Rombach et~al.(2022)Rombach, Blattmann, Lorenz, Esser, and Ommer]{Rombach_2022_CVPR}
Robin Rombach, Andreas Blattmann, Dominik Lorenz, Patrick Esser, and Bj\"orn Ommer.
\newblock High-resolution image synthesis with latent diffusion models.
\newblock In \emph{Proceedings of the IEEE/CVF Conference on Computer Vision and Pattern Recognition (CVPR)}, pp.\  10684--10695, June 2022.

\bibitem[Seshadri et~al.(2023)Seshadri, Singh, and Elazar]{seshadri2023bias}
Preethi Seshadri, Sameer Singh, and Yanai Elazar.
\newblock The bias amplification paradox in text-to-image generation.
\newblock \emph{arXiv preprint arXiv:2308.00755}, 2023.

\bibitem[Vastola(2025)]{vastola2025generalization}
John~J Vastola.
\newblock Generalization through variance: how noise shapes inductive biases in diffusion models.
\newblock \emph{arXiv preprint arXiv:2504.12532}, 2025.

\bibitem[Wang et~al.(2024)Wang, Pang, Lu, Rao, Zhou, and Xue]{wang2024dp}
Haichen Wang, Shuchao Pang, Zhigang Lu, Yihang Rao, Yongbin Zhou, and Minhui Xue.
\newblock dp-promise: Differentially private diffusion probabilistic models for image synthesis.
\newblock In \emph{33rd USENIX Security Symposium (USENIX Security 24)}, pp.\  1063--1080, 2024.

\bibitem[Wang et~al.(2025)Wang, Gao, Li, Wang, Yang, Clifton, and Xiao]{wang2025exploring}
Li~Wang, Boyan Gao, Yanran Li, Zhao Wang, Xiaosong Yang, David~A Clifton, and Jun Xiao.
\newblock Exploring the latent space of diffusion models directly through singular value decomposition.
\newblock \emph{arXiv preprint arXiv:2502.02225}, 2025.

\bibitem[Watson et~al.(2023)Watson, Juergens, Bennett, Trippe, Yim, Eisenach, Ahern, Borst, Ragotte, Milles, et~al.]{watson2023novo}
Joseph~L Watson, David Juergens, Nathaniel~R Bennett, Brian~L Trippe, Jason Yim, Helen~E Eisenach, Woody Ahern, Andrew~J Borst, Robert~J Ragotte, Lukas~F Milles, et~al.
\newblock De novo design of protein structure and function with rfdiffusion.
\newblock \emph{Nature}, 620\penalty0 (7976):\penalty0 1089--1100, 2023.

\bibitem[Wei et~al.(2024)Wei, Kreacic, Wang, Yin, Chien, Potluru, and Li]{weiinherent}
Rongzhe Wei, Eleonora Kreacic, Haoyu~Peter Wang, Haoteng Yin, Eli Chien, Vamsi~K Potluru, and Pan Li.
\newblock On the inherent privacy properties of discrete denoising diffusion models.
\newblock \emph{Transactions on Machine Learning Research}, 2024.

\bibitem[Wu et~al.(2023)Wu, Fan, Liu, Zheng, Gong, Jiao, Li, Guo, Duan, Chen, et~al.]{wu2023ar}
Tong Wu, Zhihao Fan, Xiao Liu, Hai-Tao Zheng, Yeyun Gong, Jian Jiao, Juntao Li, Jian Guo, Nan Duan, Weizhu Chen, et~al.
\newblock Ar-diffusion: Auto-regressive diffusion model for text generation.
\newblock \emph{Advances in Neural Information Processing Systems}, 36:\penalty0 39957--39974, 2023.

\bibitem[Yi et~al.(2024)Yi, Li, Xin, and Li]{yitowards}
Mingyang Yi, Aoxue Li, Yi~Xin, and Zhenguo Li.
\newblock Towards understanding the working mechanism of text-to-image diffusion model.
\newblock In \emph{The Thirty-eighth Annual Conference on Neural Information Processing Systems}, 2024.

\bibitem[Yoon et~al.(2023)Yoon, Choi, Kwon, and Ryu]{yoon2023diffusion}
TaeHo Yoon, Joo~Young Choi, Sehyun Kwon, and Ernest~K Ryu.
\newblock Diffusion probabilistic models generalize when they fail to memorize.
\newblock In \emph{ICML 2023 workshop on structured probabilistic inference $\{$$\backslash$\&$\}$ generative modeling}, 2023.

\bibitem[Zhang et~al.(2018)Zhang, Isola, Efros, Shechtman, and Wang]{zhang2018unreasonable}
Richard Zhang, Phillip Isola, Alexei~A Efros, Eli Shechtman, and Oliver Wang.
\newblock The unreasonable effectiveness of deep features as a perceptual metric.
\newblock In \emph{Proceedings of the IEEE conference on computer vision and pattern recognition}, pp.\  586--595, 2018.

\end{thebibliography}
\bibliographystyle{iclr2026_conference}

\setcounter{theorem}{0}
\setcounter{table}{0}
\setcounter{figure}{0}
\renewcommand{\thefigure}{\Alph{figure}}
\renewcommand{\thetable}{\Alph{table}}

\appendix
\section{Appendix}
\subsection{Proof of Theorem 1}
\begin{theorem}[Higher abstraction in representations induces smoother energy landscapes]
    Let \(E: \mathcal{H} \to \mathbb{R} \) be an energy function defined over a space of representations \(\mathcal{H}\), \(\phi^{(a)}: \mathcal{H} \to Z^{(a)}\) be a hierarchy of abstraction maps indexed by level \(a\), where each \(\phi^{(a)}\) is a smooth, contractive transformation, and \(Z^{(a)}\) be a vector space of latent representations after some level of abstraction. Define the energy at abstraction level \(a\) as
    \begin{equation}
        E :=E \circ (\phi^{(a)})^{-1}:Z^{(a)} \to \mathbb{R}.
    \end{equation}
    
    Then the energy landscapes \(\{E^{a}\}_a\) become progressively smoother with increasing \(a\) in the following sense:
    \begin{equation}
        \left\| \nabla^2 E^{(a+1)} \right\| < \left\| \nabla^2 E^{(a)}\right\|, \quad\text{and} \quad L^{(a+1)} < L^{(a)},
    \end{equation}

    where \(\left\| \nabla^2 E^{(a)}\right\|\) denotes the Hessian norm (curvature), and \(L^{(a)}\) is the Lipschitz constant of the gradient \( \nabla E^{(a)}\).
\end{theorem}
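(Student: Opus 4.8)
The plan is to express the energy at abstraction level $a+1$ as a composition of the energy at level $a$ with the intermediate contractive map $\psi := \phi^{(a+1)} \circ (\phi^{(a)})^{-1}: Z^{(a)} \to Z^{(a+1)}$, and then propagate the contractivity estimate through two rounds of differentiation (for the Hessian) and through a telescoping Lipschitz bound (for $L^{(a)}$). First I would set up notation carefully: since $E^{(a)} = E \circ (\phi^{(a)})^{-1}$, we have $E^{(a)} = E^{(a+1)} \circ \psi$, equivalently $E^{(a+1)} = E^{(a)} \circ \psi^{-1}$, and the key structural input is that $\psi$ (or its relevant inverse direction) is contractive, i.e.\ its Jacobian $J_\psi$ satisfies an operator-norm bound $\|J_\psi\| \le \kappa$ with $\kappa < 1$, uniformly on the relevant region. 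This is the content of ``smooth, contractive transformation'' from the hypothesis, and I would state it as the working assumption at the top of the proof.

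Second, for the Hessian bound I would apply the chain rule twice. Writing $E^{(a)}(z) = E^{(a+1)}(\psi(z))$, the gradient is $\nabla E^{(a)}(z) = J_\psi(z)^\top \nabla E^{(a+1)}(\psi(z))$, and differentiating again gives
\begin{equation}
\nabla^2 E^{(a)}(z) = J_\psi(z)^\top \, \nabla^2 E^{(a+1)}(\psi(z)) \, J_\psi(z) + \sum_k \bigl(\partial_k E^{(a+1)}\bigr)(\psi(z))\, \nabla^2 \psi_k(z).
\end{equation}
To get the inequality in the clean stated form $\|\nabla^2 E^{(a+1)}\| < \|\nabla^2 E^{(a)}\|$ I would actually want to run this in the direction $E^{(a+1)} = E^{(a)} \circ \psi^{-1}$ with $\|J_{\psi^{-1}}\| \le \kappa < 1$, so that the leading term is $J_{\psi^{-1}}^\top (\nabla^2 E^{(a)}) J_{\psi^{-1}}$, whose norm is at most $\kappa^2 \|\nabla^2 E^{(a)}\|$; then, assuming $\psi^{-1}$ is affine (or has negligible second-order curvature, which I would fold into the ``contractive'' hypothesis as a mild regularity condition, or absorb by taking $\kappa$ slightly larger while keeping it below $1$), the Hessian-of-$\psi^{-1}$ term drops out and we get $\|\nabla^2 E^{(a+1)}\| \le \kappa^2 \|\nabla^2 E^{(a)}\| < \|\nabla^2 E^{(a)}\|$. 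I would present the affine/linear-abstraction case as the clean statement and remark that the general case needs a bound controlling $\|\nabla E^{(a)}\|$ against $\|\nabla^2\psi^{-1}\|$.

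Third, for the Lipschitz constant of the gradient field I would argue similarly: for any $z, z'$,
\begin{equation}
\|\nabla E^{(a+1)}(w) - \nabla E^{(a+1)}(w')\| = \|J_{\psi^{-1}}^\top \nabla E^{(a)}(\psi^{-1}(w)) - J_{\psi^{-1}}^\top \nabla E^{(a)}(\psi^{-1}(w'))\| \le \kappa \cdot L^{(a)} \cdot \|\psi^{-1}(w) - \psi^{-1}(w')\| \le \kappa^2 L^{(a)} \|w - w'\|,
\end{equation}
again using $\|J_{\psi^{-1}}\|\le\kappa$ twice (once for the outer Jacobian factor, once for the contraction of the argument), so $L^{(a+1)} \le \kappa^2 L^{(a)} < L^{(a)}$. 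The main obstacle, and the place I expect the argument to be genuinely delicate rather than routine, is the second-order term from $\nabla^2\psi^{-1}$ in the Hessian computation: a general smooth contraction can have large curvature even while contracting distances, so the strict inequality $\|\nabla^2 E^{(a+1)}\| < \|\nabla^2 E^{(a)}\|$ is not automatic without either (i) restricting to affine/orthogonal-projection-type abstraction maps, or (ii) imposing a quantitative bound of the form $\|\nabla E^{(a)}\|\cdot\|\nabla^2\psi^{-1}\| < (1-\kappa^2)\|\nabla^2 E^{(a)}\|$ on the region of interest. I would make this hypothesis explicit and note that for the architectures in the paper (pooling/averaging layers, linear bottlenecks) condition (i) holds, so the clean statement applies; the uniformity of $\kappa$ over the landscape (rather than just pointwise) is the remaining technical point, handled by assuming the abstraction maps are globally $\kappa$-Lipschitz as stated.
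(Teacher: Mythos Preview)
Your approach is close in spirit to the paper's but organized differently, and in one respect tighter. The paper expresses each $E^{(a)}$ directly as $E\circ(\phi^{(a)})^{-1}$, applies the chain rule twice to obtain
\[
\nabla^2 E^{(a)} \;=\; J_{(\phi^{(a)})^{-1}}^\top\, (\nabla^2 E)\, J_{(\phi^{(a)})^{-1}} \;+\; \sum_i (\partial_i E)\,\nabla^2 (\phi^{(a)})^{-1},
\]
bounds the norm, and then invokes a separate Lemma asserting that both $\|J_{(\phi^{(a)})^{-1}}\|$ and $\|\nabla^2(\phi^{(a)})^{-1}\|$ decrease in $a$ (justified heuristically via ``abstraction reduces sensitivity to perturbations''). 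You instead introduce the incremental map $\psi=\phi^{(a+1)}\circ(\phi^{(a)})^{-1}$ and compare consecutive levels directly, obtaining $\|\nabla^2 E^{(a+1)}\|\le\kappa^2\|\nabla^2 E^{(a)}\|+(\text{curvature term})$. This buys something: the paper's route only shows that an \emph{upper bound} on $\|\nabla^2 E^{(a)}\|$ decreases in $a$, which does not by itself force the actual norms to be monotone, whereas your incremental formulation yields the inequality between the norms themselves once the second-order term is controlled. For the Lipschitz constant the paper simply reads $L^{(a)}$ off the Hessian bound via $L^{(a)}\le\sup\|\nabla^2 E^{(a)}\|$, while you give a self-contained two-factor estimate $L^{(a+1)}\le\kappa^2 L^{(a)}$; both are fine. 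Finally, the obstacle you flag --- that $\nabla^2\psi^{-1}$ is not automatically small for a general contraction --- is precisely what the paper's Lemma handles by assumption; your proposal to restrict to affine abstraction maps or to impose an explicit quantitative bound is a more transparent treatment of the same gap.
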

\begin{proof}
    Let \(J_{\phi^{(a)^{-1}}}(z^{(a)})\) denote the Jacobian of \(\phi^{(a)^{-1}}\) at \(z^{(a)}\). Then, by the chain rule, the gradient of the energy function \(E^{(a)}\) at \(z^{(a)}\) is given by:
    \begin{equation}
        \nabla E^{(a)}(z^{(a)}) = J_{\phi^{(a)^{-1}}}(z^{(a)})^\top\nabla E(\phi^{(a)^{-1}}(z^{(a)})).
    \end{equation}
    Similarly, the Hessian of \(E^{(a)}\) is given by:
    \begin{equation}
         \nabla^2 E^{(a)}(z^{(a)})=J_{\phi^{(a)^{-1}}}(z^{(a)})^\top \nabla^2E(\phi^{(a)^{-1}}(z^{(a)}))J_{\phi^{(a)^{-1}}}+\sum_i\frac{\partial E}{\partial x_i} \nabla^2\phi^{(a)^{-1}}(z^{(a)}).
    \end{equation}
    Taking norms on both sides, we obtain:
    \begin{equation}
        \left\| \nabla^2 E^{(a)}(z^{(a)}) \right\| \le \overbrace{\| J_{\phi^{(a)^{-1}}}  \|^2}^{\text{decreasing in } a} .  \overbrace{\left\| \nabla^2 E \right\|}^{\text{fixed}} +  \| \sum_i \overbrace{\frac{\partial E}{\partial x_i}}^{\text{fixed}} \overbrace{\nabla^2\phi^{(a)^{-1}}(z^{(a)})}^{\text{decreasing in }a} \|.
    \end{equation}
    Based on Lemma~\ref{lemma:1}, \(\phi^{(a)}\)  is a contractive and smoothing map (i.e., the Jacobian norm decreases and the second derivatives of its inverse diminish with increasing \(a\)), both terms in the bound decrease with increasing \(a\). Therefore, the norm of the Hessian satisfies
    \begin{equation}
        \left\| \nabla^2 E^{(a+1)} \right\| < \left\| \nabla^2 E^{(a)}\right\|. 
    \end{equation}
    Moreover, let \(L^{(a)}\) denote the Lipschitz constant of \(\nabla E^{(a)}\), so that:
    \begin{equation}
    \label{eq:x}
        \left\|\nabla E^{(a)}(z_1^{(a)}) - \nabla E^{(a)}(z_2^{(a)}) \right\| \leq L^{(a)} \left\| z_1^{(a)} - z_2^{(a)}\right\|, \quad \forall z_1^{(a)}, z_2^{(a)} \in Z^{(a)}.
    \end{equation}
    Since \(\left\| \nabla^2 E^{(a)}\right\| \ge L^{(a)}\), Equation~\ref{eq:x} implies:
    \begin{equation}
        L^{(a+1)} < L^{(a)}
    \end{equation}
    This shows that both the second-order and first-order variations of the energy function diminish with increasing abstraction level. Hence, the energy landscapes become progressively smoother.
\end{proof}
\begin{lemma}[Higher levels of abstraction in the representation induce smaller Jacobian norm and second-order derivative of the representation mapping]
\label{lemma:1}
\end{lemma}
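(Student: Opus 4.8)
The plan is to exploit the \emph{compositional} structure of the abstraction hierarchy and reduce the statement to a single per-level estimate that is then iterated. Concretely, I would adopt the working hypothesis --- implicit in the phrase ``hierarchy of abstraction maps'' and modelling the pooling / averaging / low-pass stages the analysis has in mind --- that moving from level \(a\) to level \(a+1\) appends exactly one more smooth contractive stage. In terms of the decoder \(\phi^{(a)^{-1}}\) (the ``representation mapping'' whose derivatives feed \Cref{theorem:1}) this means there is a smooth \(g_a : Z^{(a+1)} \to Z^{(a)}\) with
\[
\bigl\| J_{g_a}(z) \bigr\| \le \kappa < 1 \qquad\text{and}\qquad \bigl\| \nabla^2 (g_a)_\ell(z) \bigr\| \le M_a \quad\text{for all } z,\ \ell,
\]
such that \(\phi^{(a+1)^{-1}} = \phi^{(a)^{-1}} \circ g_a\). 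Where \(\phi^{(a)}\) is only a local diffeomorphism one restricts to neighbourhoods of the attractors, which is all \Cref{theorem:1} ever uses.

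\textbf{Step 1 (first order).} By the chain rule \(J_{\phi^{(a+1)^{-1}}}(z) = J_{\phi^{(a)^{-1}}}(g_a(z))\, J_{g_a}(z)\), so submultiplicativity of the operator norm gives
\[
\bigl\| J_{\phi^{(a+1)^{-1}}} \bigr\| \;\le\; \kappa\, \bigl\| J_{\phi^{(a)^{-1}}} \bigr\| \;<\; \bigl\| J_{\phi^{(a)^{-1}}} \bigr\|,
\]
and iterating produces geometric decay of the Jacobian norm in \(a\). This is the first half of the claim, and it is exactly the ``decreasing in \(a\)'' fact about \(\|J_{\phi^{(a)^{-1}}}\|\) that \Cref{theorem:1} invokes.

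\textbf{Step 2 (second order).} Differentiating the chain rule a second time gives, componentwise,
\[
\nabla^2\bigl(\phi^{(a+1)^{-1}}\bigr)_k = J_{g_a}^\top\, \nabla^2\bigl(\phi^{(a)^{-1}}\bigr)_k\, J_{g_a} \;+\; \sum_\ell \partial_\ell\bigl(\phi^{(a)^{-1}}\bigr)_k\, \nabla^2 (g_a)_\ell,
\]
whence
\[
\bigl\| \nabla^2 \phi^{(a+1)^{-1}} \bigr\| \;\le\; \kappa^2\, \bigl\| \nabla^2 \phi^{(a)^{-1}} \bigr\| \;+\; \bigl\| J_{\phi^{(a)^{-1}}} \bigr\|\, M_a .
\]
The first term strictly contracts (factor \(\kappa^2 < 1\)); the second couples the geometrically small factor \(\|J_{\phi^{(a)^{-1}}}\|\) from Step~1 with the bounded curvature of the freshly appended stage. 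Summing the inequality and using either a base-level bound together with the geometric decay of \(\|J_{\phi^{(a)^{-1}}}\|\), or the natural extra assumption that contractive smoothing stages are curvature non-increasing (\(M_a \to 0\), or merely \(\sum_a M_a < \infty\)), one gets \(\bigl\| \nabla^2 \phi^{(a+1)^{-1}} \bigr\| < \bigl\| \nabla^2 \phi^{(a)^{-1}} \bigr\|\) for every \(a\) --- the second half of the claim.

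The main obstacle is the cross term \(\|J_{\phi^{(a)^{-1}}}\|\, M_a\) in Step~2: it is not automatically dominated by \(\kappa^2 \|\nabla^2 \phi^{(a)^{-1}}\|\), so the argument only closes once one commits to a quantitative comparison between how fast successive stages flatten curvature and how slowly the pre-existing curvature decays; this is where a concrete hypothesis on the pair \((\kappa, M_a)\) is unavoidable. A secondary, routine technicality is that \(\phi^{(a)}\) need only be invertible near the minima, so all the Jacobians and Hessians above should be read as localised to such neighbourhoods, which does not affect the downstream use in \Cref{theorem:1}.
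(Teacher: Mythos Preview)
Your route is genuinely different from the paper's. The paper does not exploit the compositional structure or the chain rule at all; it argues both halves by a \emph{perturbation-sensitivity} heuristic. For the first-order part it Taylor-expands \(z^{(a)}(x+\delta)\) and then simply \emph{postulates} that ``increasing abstraction decreases sensitivity to perturbations,'' i.e.\ \(\|z^{(a+1)}(x+\delta)-z^{(a+1)}(x)\|<\|z^{(a)}(x+\delta)-z^{(a)}(x)\|\), and reads off \(J(a+1)<J(a)\) --- where \(J(a):=\|\partial z^{(a)}/\partial x\|\) is the \emph{forward} Jacobian, not the inverse-map Jacobian you bound. For the second-order part the paper again Taylor-expands, explicitly \emph{discards} the very term containing \(\nabla^{2}\phi^{(a+1)^{-1}}\), and ends up concluding \(\|\nabla^{2}E^{(a+1)}\|<\|\nabla^{2}E^{(a)}\|\) rather than anything about \(\nabla^{2}\phi^{(a)^{-1}}\) itself. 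So the paper's argument is essentially axiomatic in Part~1 and circular with Theorem~1 in Part~2; your Fa\`a-di-Bruno computation is a more honest attempt at what the lemma title promises, it targets \(\phi^{(a)^{-1}}\) directly (the object Theorem~1 actually consumes), and it is candid about the cross-term obstruction the paper simply drops.

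There is, however, a direction slip in your setup. You say that appending ``one more smooth contractive stage'' to the encoder yields a one-step \emph{decoder} \(g_a:Z^{(a+1)}\to Z^{(a)}\) with \(\|J_{g_a}\|\le\kappa<1\). That does not follow: if the encoder step \(h_a:Z^{(a)}\to Z^{(a+1)}\) is contractive, then \(g_a=h_a^{-1}\) is expansive, so \(\|J_{g_a}\|\ge 1\) and your Step~1 inequality points the wrong way. The paper carries the same tension (it asserts both that \(\phi^{(a)}\) is contractive and, in its proof, that \(\|J_{\phi^{(a+1)^{-1}}}\|<1\)), so this is arguably inherited rather than introduced by you; but if you want the chain-rule route to actually close, you must decide which direction is contractive and state it consistently, rather than deriving contractivity of \(g_a\) from contractivity of the encoder.
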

\begin{proof}
    \textbf{Diminishing Jacobian:} Define the sensitivity to perturbations at level \(a\) as the norm of the Jacobian as: 
    \begin{equation}
    J(a) := \left\Vert\frac{\partial{z}^{(a)}}{\partial{x}}\right\Vert, \quad
    J(a+1) := \left\Vert\frac{\partial{z}^{(a+1)}}{\partial{x}}\right\Vert.
    \end{equation}
    By first-order Taylor expansion, the effect of the perturbation on the representation at level \(a\) is:
    \begin{equation}
    z^{(a)}(x+\delta) \approx z^{(a)}(x) + \frac{\partial{z^{(a)}}}{\partial{x}}\delta,
    \end{equation}
    and thus the magnitude of the change is:
    \begin{equation}
    \label{eq:28}
    \left\Vert z^{(a)}(x+\delta) - z^{(a)}(x)\right\Vert \approx \left\Vert \frac{\partial{z^{(a)}}}{\partial{x}}\delta\right\Vert \leq J(a).\|\delta\|.
    \end{equation}
    Similarly, for level \(a+1\):
    \begin{equation}
    z^{(a+1)}(x+\delta) \approx z^{(a+1)}(x) + \frac{\partial{z^{(a+1)}}}{\partial{x}}\delta,
    \end{equation}
    and
    \begin{equation}
    \label{eq:30}
    \left\Vert z^{(a+1)}(x+\delta) - z^{(a+1)}(x)\right\Vert \approx \left\Vert \frac{\partial{z^{(a+1)}}}{\partial{x}}\delta\right\Vert \leq J(a+1).\|\delta\|.
    \end{equation}
    As increasing abstraction decreases sensitivity to perturbations: 
    \begin{equation}
        \left\Vert z^{(a+1)}(x+\delta) - z^{(a+1)}(x)\right\Vert < \left\Vert z^{(a)}(x+\delta) - z^{(a)}(x)\right\Vert.
    \end{equation}
    Using the approximations in \Cref{eq:28} and \Cref{eq:30}, this gives:
    \begin{equation}
        J(a+1).\|\delta\| < J(a).\|\delta\| \implies J(a+1) < J(a).
    \end{equation}
    \textbf{Diminishing second derivative of \(\phi^{(a)^{-1}}\):} 
    Consider a small perturbation \(\delta\) added at level \( a+1\):
    \begin{equation}
        z^{(a+1)} + \delta.
    \end{equation}
    The corresponding change at level \(a\) is approximated by a first and second order Taylor expansion of the inverse map \(\phi^{(a+1)^{-1}}\):
    \begin{equation}
        z^{(a)} + \Delta z^{(a)} \approx \phi^{(a+1)^{-1}}(z^{(a+1)} + \delta) = z^{(a)} + J_{\phi^{(a+1)^{-1}}}\delta + \frac{1}{2}\delta^{\top}\nabla^2\phi^{(a+1)^{-1}}\delta,
    \end{equation}
    where \(J_{\phi^{(a+1)^{-1}}}\) is the Jacobian matrix and \(\nabla^2\phi^{(a+1)^{-1}}\) is the Hessian tensor of \(\phi^{(a+1)^{-1}}\).
    The change in the energy at level \(a+1\) due to \(\delta\) is:
    \begin{equation}
        \Delta^2E^{(a+1)} \approx \delta^\top \nabla^2E^{(a+1)}\delta.
    \end{equation}
    At level \(a\), the change is:
    \begin{equation}
        \Delta^2E^{(a)} \approx (J_{\phi^{(a+1)^{-1}}}\delta)^\top\nabla^2E^{(a)}(J_{\phi^{(a+1)^{-1}}}\delta),
    \end{equation}
    ignoring higher order terms involving \(\nabla^2\phi^{(a+1)^{-1}}\).
    As increasing abstraction implies reduction in sensitivity, we have \(|J_{\phi^{(a+1)^{-1}}}| < 1\). So,
    \begin{equation}
        \|\nabla^2E^{(a+1)}\| \le \|J_{\phi^{(a+1)^{-1}}}\|^2.\|\nabla^2E^{(a)}\|.
    \end{equation}
    Thus,
    \begin{equation}
         \|\nabla^2E^{(a+1)}\| <  \|\nabla^2E^{(a)}\|
    \end{equation}
    
\end{proof}
\subsection{Implementation Details}
All the experiments are implemented in Pytorch and they used three NVIDIA RTX 3090 GPUs each with 24G RAM. The details of the trained models are summarized in \Cref{tab:supp_implementation}.
\begin{table}[ht]
\caption{Details of trained diffusion models. Please note the number of parameters on the ``large 2 encoding blocks'' model, which is at the same scale of deeper models (``5 and 6 encoding blocks''). However, its bias-amplification behavior is more similar to shallower models (``2 and 3 encoding blocks'') (see Table 2 in the main paper).}
\label{tab:supp_implementation}
\resizebox{\textwidth}{!}{%
\begin{tabular}{@{}cccccc@{}}
\toprule
                                                                     & \multicolumn{5}{c}{Architecture}                                                                                                                                                                                                                                                                                    \\ \cmidrule(l){2-6} 
\begin{tabular}[c]{@{}c@{}}Number of \\ encoding blocks\end{tabular} & Down-block types                                                                                                           & Up-block types                                                                                                  & Block out channels         & Layers per block & \# of parameters \\ \midrule
2                                                                    & DownBlock2D, AttnDownBlock2D                                                                                               & AttnUpBlock2D, UpBlock2D                                                                                        & 32, 64                     & 2                & 1002883              \\ \midrule
2 (Large)                                                            & DownBlock2D, AttnDownBlock2D                                                                                               & AttnUpBlock2D, UpBlock2D                                                                                        & 128, 256                   & 2                & 15930883             \\ \midrule
3                                                                    & DownBlock2D, AttnDownBlock2D, DownBlock2D                                                                                  & UpBlock2D, AttnUpBlock2D, UpBlock2D                                                                             & 32, 64, 128                & 2                & 3692867              \\ \midrule
4                                                                    & \begin{tabular}[c]{@{}c@{}}DownBlock2D, DownBlock2D, AttnDownBlock2D,\\ DownBlock2D\end{tabular}                           & \begin{tabular}[c]{@{}c@{}}UpBlock2D, UpBlock2D, AttnUpBlock2D,\\ UpBlock2D\end{tabular}                        & 32, 32, 64, 128            & 2                & 3859203              \\ \midrule
5                                                                    & \begin{tabular}[c]{@{}c@{}}DownBlock2D, DownBlock2D, DownBlock2D,\\ AttnDownBlock2D, DownBlock2D\end{tabular}              & \begin{tabular}[c]{@{}c@{}}UpBlock2D, AttnUpBlock2D, UpBlock2D,\\ UpBlock2D, UpBlock2D\end{tabular}             & 32, 64, 128, 128, 256      & 2                & 16887619             \\ \midrule
6                                                                    & \begin{tabular}[c]{@{}c@{}}DownBlock2D, DownBlock2D, DownBlock2D,\\ DownBlock2D, AttnDownBlock2D, DownBlock2D\end{tabular} & \begin{tabular}[c]{@{}c@{}}UpBlock2D, AttnUpBlock2D, UpBlock2D, \\ UpBlock2D, UpBlock2D, UpBlock2D\end{tabular} & 32, 64, 128, 128, 256, 256 & 2                & 27290691             \\ \bottomrule
\end{tabular}%
}
\end{table}
\subsection{Bias Amplification}
\Cref{fig:supp_fig1} and \Cref{fig:supp_fig0} show qualitative examples of bias amplification in deeper models. \Cref{fig:supp_fig1} shows representative examples  where shallower models generated images of  ``blond-male'' minority group, but the images got changed to the majority groups (classes) of ``non-blond male'' and ``blond female'' for the deeper models when starting from the same seeds. Similarly, \Cref{fig:supp_fig0} presents cases in which shallow models generate ``male'' images that are  minority group, while deeper models converted the same seeds to the majority group, i.e., ``female''.
\begin{figure}
    \centering
    \includegraphics[width=\linewidth]{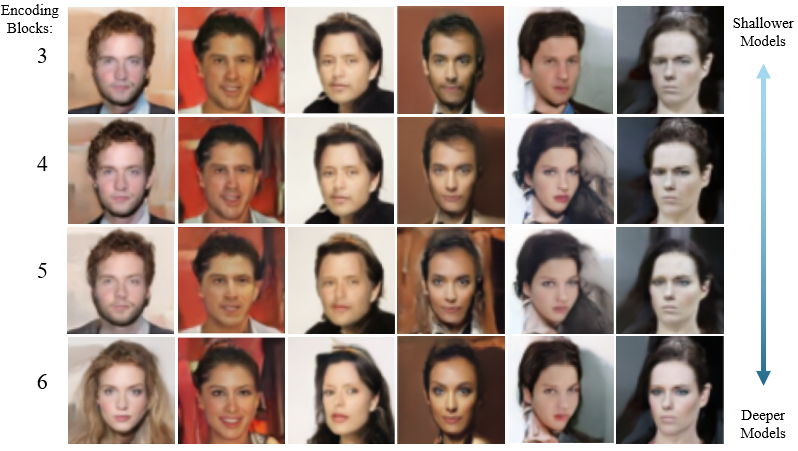}
    \caption{Bias amplification in deeper models. Top row shows samples of generated images resembling ``blond male'' by a relatively shallow model consisting of three encoding blocks. Lower rows show generated images of increasingly deeper models starting from the same seeds. It can be seen that the deeper models prefer generating samples from the majority groups (non-blond male and blond-female) instead of the minority group (blond-male), thereby amplifying the data bias.}
    \label{fig:supp_fig1}
\end{figure}

\begin{figure}
    \centering
    \includegraphics[width=\linewidth]{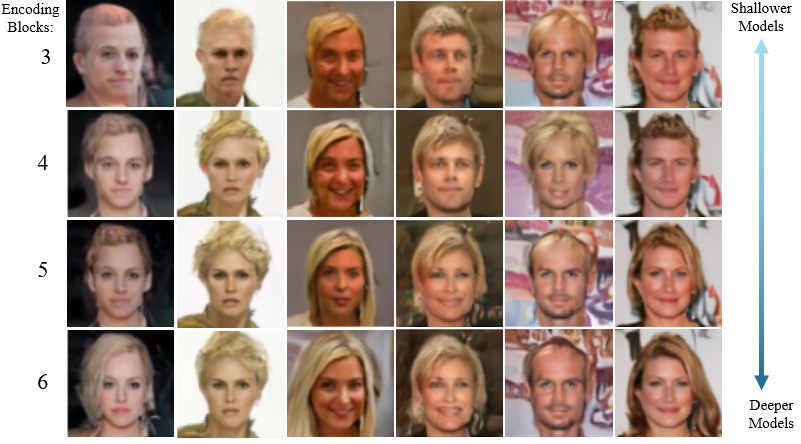}
    \caption{Representative cases where shallow models generate samples of minority group ``male'', but deeper models generate majority group ``female'' starting from the same seeds.}
    \label{fig:supp_fig0}
\end{figure}
\subsection{Pushing Models to High Variance State}
Illustrated in \Cref{fig:supp_fig2}, the model generates slightly different images for prompts ``an actress'', ``a queen'', , and ``a woman''. Then, by gradually bypassing the mid-block, the model gets pushed to ``high variance'' state. Correspondingly, the generated images begin to look more alike and realistic in the ``high variance'' state, supporting our claim of attending to lower number of memories in this state.
\begin{figure}
    \centering
    \includegraphics[width=0.75\linewidth]{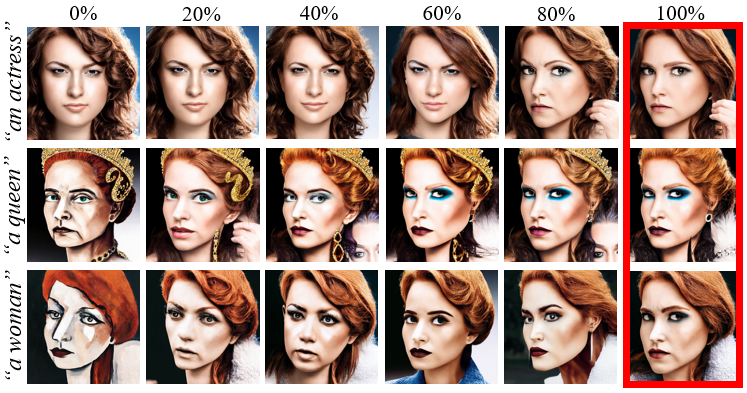}
    \caption{A model pushed towards ``high variance'' state by bypassing its mid-block generates similar looking  images for slightly different prompts. The resulting images not only have high perceptual similarity across the prompts, but also better realism. This behavior is consistent with our arguments about ``high variance'' state, in which the model attends to fewer memories.}
    \label{fig:supp_fig2}
\end{figure}
\end{document}